\documentclass{article}

\usepackage{microtype}
\usepackage{graphicx}
\usepackage{subfigure}
\usepackage{booktabs} %

\usepackage{hyperref}

\usepackage{amsfonts}
\usepackage{amsmath}
\usepackage{amsthm}
\usepackage{amssymb}
\usepackage{dsfont}
\usepackage{pifont}

\usepackage{empheq}

\usepackage{xcolor}
\usepackage{color}
\usepackage{graphicx}

\usepackage{verbatim}
\usepackage{xspace} %

\usepackage{enumerate}
\usepackage{enumitem}

\definecolor{lightblue}{HTML}{BBEEFF}

\newcommand*{\colorboxed}{}
\def\colorboxed#1#{%
  \colorboxedAux{#1}%
}
\newcommand*{\colorboxedAux}[3]{%
  \begingroup
    \colorlet{cb@saved}{.}%
    \color#1{#2}%
    \boxed{%
      \color{cb@saved}%
      #3%
    }%
  \endgroup
}

\definecolor{boxcolour}{HTML}{ffe9f4}

\providecommand{\abs}[1]{\left\lvert#1\right\rvert}
\providecommand{\norm}[1]{\left\lVert#1\right\rVert}

\providecommand{\R}{\mathbb{R}} %

\DeclareMathOperator{\E}{{\mathbb E}}

\providecommand{\0}{\mathbf{0}}

\renewcommand{\aa}{\mathbf{a}}
\providecommand{\bb}{\mathbf{b}}

\providecommand{\uu}{\mathbf{u}}

\providecommand{\xx}{\mathbf{x}}
\providecommand{\yy}{\mathbf{y}}

\providecommand{\mA}{\mathbf{A}}

\providecommand{\cO}{\mathcal{O}}

\providecommand{\cS}{\mathcal{S}}

\usepackage{bm}

\usepackage{url}

\renewcommand{\epsilon}{\varepsilon}

\def\beq{\begin{equation}}
\def\eeq{\end{equation}}
\def\ba{\begin{array}}
\def\ea{\end{array}}
\def\la{\langle}
\def\ra{\rangle}
\definecolor{mydarkgreen}{RGB}{39,130,67}

\definecolor{mydarkred}{RGB}{192,47,25}

\DeclareMathOperator{\sigmasgd}{\sigma^2_{\operatorname{SGD}}}

\DeclareMathOperator{\sigman}{\sigma^2_{n}}

\DeclareMathOperator{\sigmakt}{\sigma^2_{k,\tau}}

\DeclareMathOperator{\sigmat}{\sigma^2_{\tau}}

\DeclareMathOperator{\sigmaepoch}{\sigma^2_{\operatorname{EPOCH}}}

\DeclareMathOperator{\sigmaone}{\sigma^2_{\operatorname{ONE}}}

\DeclareMathOperator{\epsilontt}{\epsilon^{3/2}}

\DeclareMathOperator\mymod{{{\rm \, mod \,}}}

\usepackage[accepted]{my_icml2024}

\usepackage{amsmath}
\usepackage{amssymb}
\usepackage{mathtools}
\usepackage{amsthm}
\usepackage{nicefrac}       %

\usepackage[capitalize,noabbrev]{cleveref}

\theoremstyle{plain}
\newtheorem{theorem}{Theorem}[section]

\newtheorem{lemma}[theorem]{Lemma}

\theoremstyle{definition}
\newtheorem{example}[theorem]{Example}
\newtheorem{definition}[theorem]{Definition}
\newtheorem{assumption}[theorem]{Assumption}
\theoremstyle{remark}

\usepackage[textsize=tiny]{todonotes}

\icmltitlerunning{On Convergence of Incremental Gradient for Non-Convex Smooth Functions}

\begin{document}

\twocolumn[
\icmltitle{On Convergence of Incremental Gradient for Non-Convex Smooth Functions}

\icmlsetsymbol{equal}{*}

\begin{icmlauthorlist}
\icmlauthor{Anastasia Koloskova}{epfl}
\icmlauthor{Nikita Doikov}{epfl}
\icmlauthor{Sebastian U. Stich}{cispa}
\icmlauthor{Martin Jaggi}{epfl}
\end{icmlauthorlist}

\icmlaffiliation{epfl}{Machine Learning and Optimization Laboratory (MLO), EPFL, Lausanne, Switzerland}
\icmlaffiliation{cispa}{CISPA Helmholtz Center for Information Security, Saarbrücken, Germany}

\icmlcorrespondingauthor{Anastasia Koloskova}{anastasia.koloskova@epfl.ch}
\icmlcorrespondingauthor{Nikita Doikov}{nikita.doikov@epfl.ch}
\icmlcorrespondingauthor{Sebastian U. Stich}{stich@cispa.de}
\icmlcorrespondingauthor{Martin Jaggi}{martin.jaggi@epfl.ch}

\icmlkeywords{Machine Learning, stochastic optimization, SGD without replacement, non-convex smooth functions}

\vskip 0.3in
]

\printAffiliationsAndNotice{}  %

\begin{abstract}
    
    In machine learning and neural network optimization, algorithms like incremental gradient, and shuffle SGD are popular due to minimizing the number of cache misses and good practical convergence behavior. However, their optimization properties in theory, especially for non-convex smooth functions, remain incompletely explored. 
    
    This paper delves into the convergence properties of SGD algorithms with arbitrary data ordering, within a broad framework for non-convex smooth functions. Our findings show enhanced convergence guarantees for incremental gradient and single shuffle SGD. Particularly if $n$ is the training set size, we improve $n$ times the optimization term of convergence guarantee to reach accuracy $\epsilon$ from $\cO \left( \nicefrac{  {{n} }  }{\epsilon} \right)$ to $\cO \left( \nicefrac{ {{1} } }{\epsilon} \right)$.

\end{abstract}

\section{Introduction}
In this paper we study the problem of minimizing the finite-sum objective:
\begin{align}\label{eq:problem}
    \textstyle\min_{\xx \in \R^d} \Big[f(\xx) := 
    \frac{1}{n}\sum_{i = 1}^n f_i(\xx)\Big].
\end{align}
We denote by $n$ the number of functions $f_i \colon \R^d \to \R$. Every function $f_i$
is assumed to be smooth and can be non-convex.
This optimization problem arises in many practical applications. For example, in the training of machine learning models with $n$ being the training set size and each $f_i(\xx)$ being the loss of the model on the $i$-th datapoint,
while $\xx$ is the vector of the model parameters.

A common method for solving optimization problems of the form~\eqref{eq:problem}
is the Stochastic Gradient Descent (SGD) algorithm
and its various modifications \cite{lan2020first}.
During each iteration $t \geq 0$ of the method, an index
$i_t \in [n] := \{1, \ldots, n\}$ (or a subset of indices) 
is selected upon which a gradient step is performed: 
\beq \label{eq:algo}
\ba{rcl}
\xx_{t + 1} =  \xx_t - \gamma \nabla f_{i_{t}}(\xx_t),
\ea
\eeq
where $\gamma > 0$ is a step size.
Defining how the index $i_t$ is selected in \eqref{eq:algo} is crucial for the performance of this algorithm.
The standard theoretical analysis often assumes 
that the index $i_t \sim [n]$ is chosen uniformly at random (a variant we refer to as SGD in this paper). However, in practice, a specific
order of selecting the gradients can be \textit{predefined} by a configuration of the system. For example, many standard frameworks use dataloaders that process data in epochs. Within each epoch, we access all indices
in a permutation.
The permutation can vary across epochs,
or fixed at once in the beginning. The latter is one of the most popular approaches in practice due to simplicity of its implementation
and high efficacy of memory usage, minimizing
the number of cache misses.
From the optimization perspective, this means that after performing the gradient step~\eqref{eq:algo} for a
certain index $i_t$, the next time we access the function $f_{i_t}$ again
is \textit{exactly after $n$ iterations}.
Despite being widely approved by the technical and engineering needs,
these variations of SGD present one of the most challenging scenario for the analysis, because of mutual dependencies between consecutive steps.\looseness=-1

\begin{figure*}[h!]
    \vspace{-0.92em}
    \centering
    \subfigure[\centering \scriptsize Convergence of the gradient norm with epochs $(d = 100, n = 50)$]{\label{fig:quadr_converg}\includegraphics[width=0.63\linewidth]{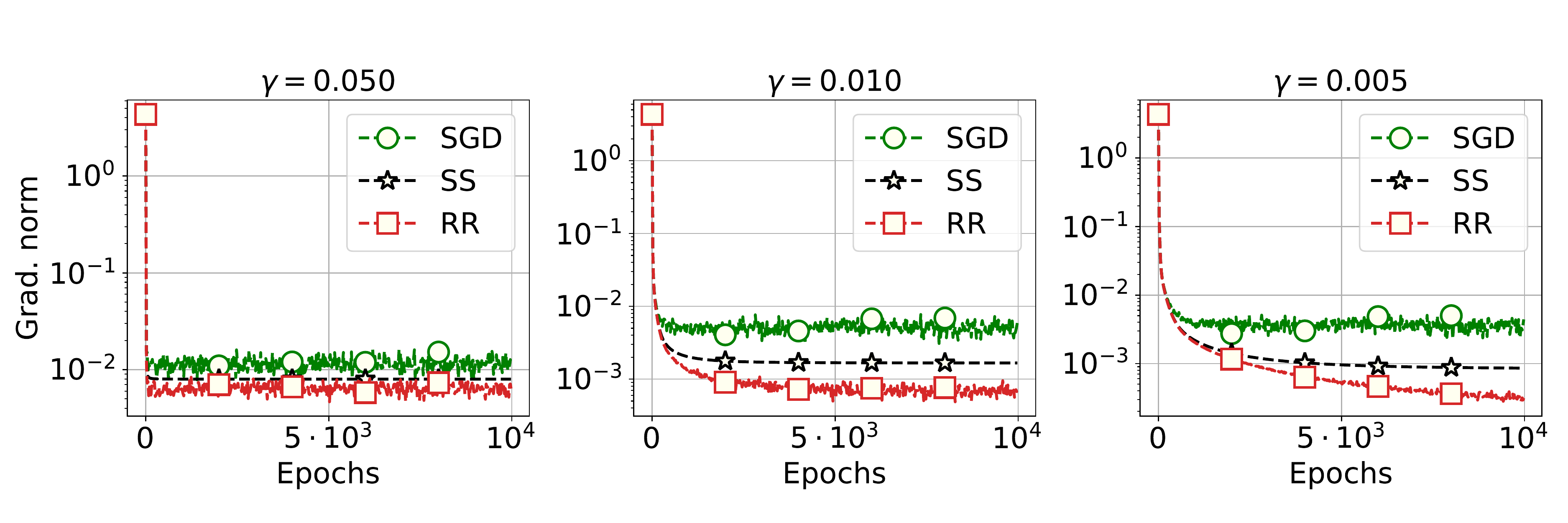}}%
    \qquad
    \subfigure[\centering  \scriptsize Number of iterations to reach $\varepsilon$ accuracy (the lower is the better).]{\label{fig:quadr_n}{\includegraphics[width=0.25\linewidth]{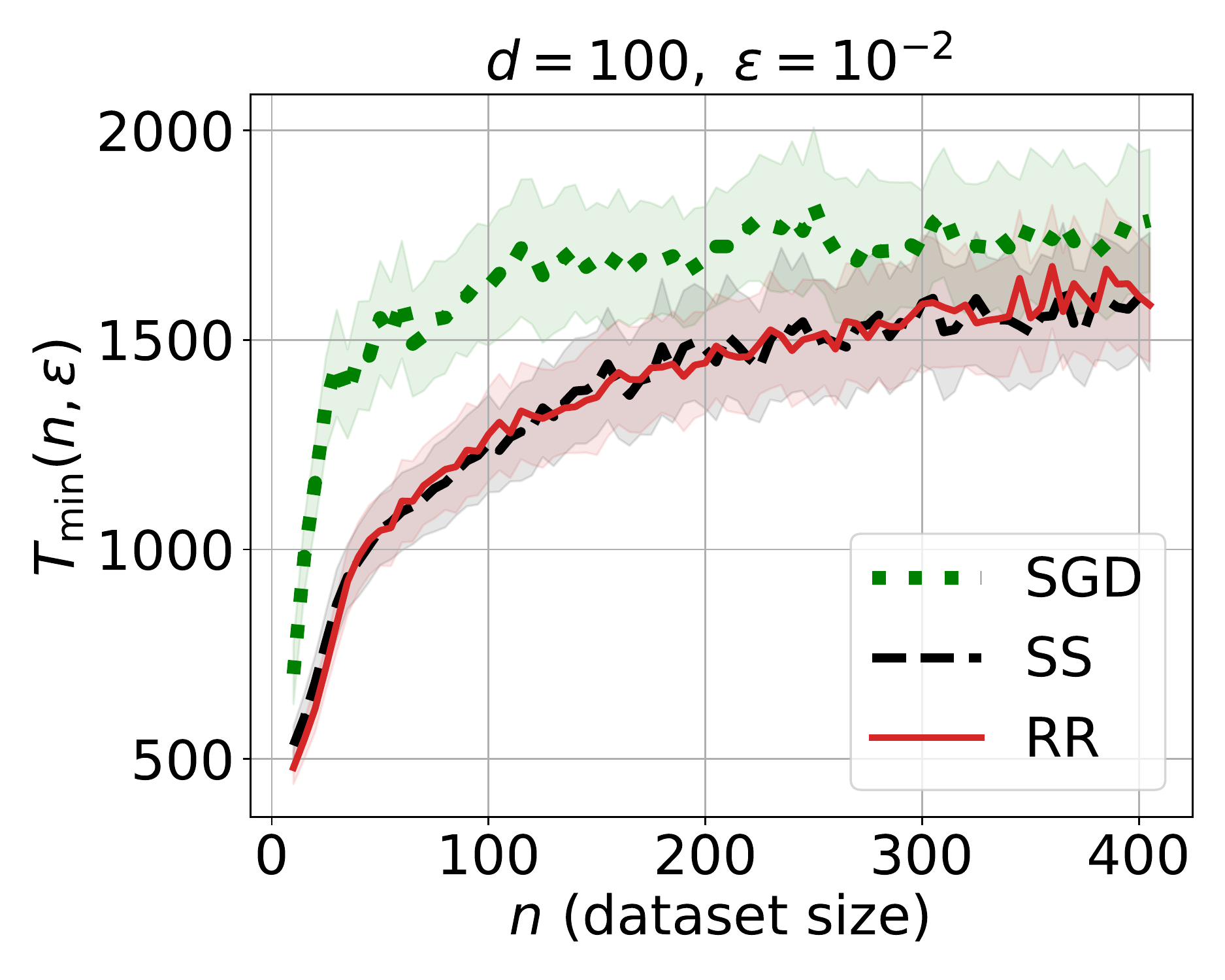} }}%
    \vspace{-1em}
    \caption{\small Minimizing stochastic quadratic function for different strategies of sampling the gradients.
    Random Reshuffling (RR) and Single Shuffling (SS) work better than SGD.}%
    \label{fig:Quadratic}\vspace{-2mm}
\end{figure*}

In this paper, our goal is to understand the convergence
properties of algorithm~\eqref{eq:algo}
with a predefined selection of indices shared within one epoch.
An important particular case is called
the \emph{incremental gradient descent} (IG) algorithm.
In recent decades, it has received significant attention 
in the optimization literature
\cite{bertsekas2011incremental}.
The basic version of this method simply substitutes
\beq \label{eq:intro_ic}
\ba{rcl}
i_{t} & := & t \mymod n + 1 
\ea
\eeq
into algorithm \eqref{eq:algo}.
Note that when the initial order of functions
in \eqref{eq:problem}
is randomly permuted once in the beginning,
this selection rule is equivalent 
to the so-called Single Shuffle (SS)
method, as opposed to Random Reshuffle (RR),
which permutes the functions after every epoch.
Even when putting aside cache efficiency in practical implementations, both SS and RR show good practical behavior often surpassing classic SGD with uniform sampling (see e.g. Fig.~\ref{fig:Quadratic}). This makes them popular in the machine learning and optimization community \citep{Mishchenko2020RandomRS,lu2022a:shuffle} and very often these are the methods of choice that are implemented in practical systems.
Another important example of a significant benefit of the incremental algorithms is second-order optimization. Thus, it was shown in \cite{rodomanov2016superlinearly} that the cyclic rule \eqref{eq:intro_ic} for updating a second-order model in the Newton methods is essential to preserve the local superlinear convergence.

Despite their popularity and wide practical approval, 
the convergence rates for 
the methods with
arbitrary data orders
and, specifically, for the incremental gradients~\eqref{eq:intro_ic} are not well understood. 
Up to our knowledge, the best results for IG show a dependency on the dataset size $n$ of the form $\Omega\bigl(\frac{n}{\epsilon}\bigr)$, i.e.\ a linear dependence on $n$~\citep{Mishchenko2020RandomRS,lu2022a:shuffle}.
This means that $n$-iterations of IG give comparable progress to one step of the (full-batch) gradient method. This is in contrast to convergence rates of SGD, for which the optimization term is $\mathcal{O}\bigl(\frac{1}{\epsilon}\bigr)$ and does not depend on $n$. This is also in contrast to empirical observations. \looseness=-1

In this work, we provide a novel analysis for a general family
of algorithms of the form~\eqref{eq:algo},
with an emphasis on the incremental gradient methods
and establish new convergence rates
for non-convex optimization that are significantly better
than the previously known ones (see Table~\ref{tab:rates}),
thus advancing in a long-standing problem in optimization theory.\looseness=-1

In order to achieve such an improvement, we develop a new proof technique, drawing inspiration from previous work on SGD with linearly correlated noise, such as occurring in optimization with differential privacy \cite{koloskova23:correlated_noise}. In particular, we divide all iterations $t = 0, \dots, T$ of the method into smaller chunks of size $\tau = \Theta\left(\nicefrac{1}{L \gamma}\right)$ with $L$ being the smoothness constant and~$\gamma$ the step size, and analyze each of the chunks separately. %
This is different from the previous works as they usually consider the correlation periods of a fixed size $n$, regardless of how large $n$ is.

We can summarize our contributions as follows:
\begin{itemize}[leftmargin=12pt,itemsep=1pt,topsep=1pt]
    \item We derive convergence rates for SGD with \emph{arbitrary} data orderings for non-convex smooth functions, that cover a wide range of algorithms, including, but not restricted to Random Reshuffling (RR), Single Shuffle (SS) and Incremental Gradient (IG).
    
    \item Our convergence rates strictly improve all the rates in prior works for the non-convex smooth functions in the case of the Incremented Gradient
    and Single Shuffle (see Table~\ref{tab:rates}).\looseness=-1
    
    \item Our theoretical analysis technique is novel and of independent interest in comparing other schemes.
\end{itemize}
\begin{table*}[t]
\centering
\resizebox{0.7\linewidth}{!}{
\begin{tabular}{c|c|c}
\toprule
Algorithm & Prior works & This work \\ \midrule
Incremental Gradient / \\ Single Shuffle
& $\, L F_0 \cdot \cO \Bigl( \frac{  {\color{mydarkred} \boldsymbol{n} }  }{\epsilon} + \frac{n\sigma_{\operatorname{SGD}}}{\epsilontt} \Bigr)$ 
&  $L F_0 \cdot \cO \Bigl( \frac{ {\color{mydarkgreen} \boldsymbol{1} } }{\epsilon} +
{\color{mydarkgreen} \boldsymbol{\min}}\Bigl\{ \frac{n \sigma_{\operatorname{SGD}} }{\epsilontt}, 
{\color{mydarkgreen} \boldsymbol{\frac{n \sigma^2_{\operatorname{SGD}}}{\epsilon^{2}} }} \Bigr\}
\Bigr)$\vspace{0.5mm}
\vspace{0.5mm}\\
\bottomrule
\end{tabular}
}  %
\vspace{1mm}
\caption{Prior best known \citep{Mohtashami2022data_order, lu2022a:shuffle, Mishchenko2020RandomRS} complexity (number of iterations to achieve accuracy $\epsilon$) for the special case of
Incremental Gradient and Single Shuffle methods covered in our framework, compared to the improved rate derived in our work. $\cO( \cdot )$ hides an absolute numerical constant. 
Note that the standard complexity of SGD is $LF_0 \cdot \cO( \frac{1}{\epsilon} + \frac{\sigma^2_{\operatorname{SGD}}}{\epsilon^2}  )$, which is also covered by our
analysis as a special case \eqref{eq:sgd_rate}.
However, the practical behaviour of Single Shuffle is usually better than that of SGD 
(see Fig.~\ref{fig:Quadratic}
and Section~\ref{SectionExperiments} with our numerical experiments).
}\vspace{-4mm}
\label{tab:rates}
\end{table*}

\section{Related work}
In this section, we review some of the most well-established practical and theoretical results on incremental gradient, shuffle SGD variants and SGD with other data orders that were known prior to our work.

\textbf{Practical observations.}
Preceding any theoretical findings,
it was initially empirically discovered that the behaviour
of  stochastic methods can vary significantly depending
on the order of stochastic gradients.
It has been widely reported in the literature that
random shuffling or cycling through the data over a fixed shuffle (permutation)
yields better convergence  
than sampling the datapoints uniformly at random.
For instance, \cite{nedic2001incremental}
discovered this in the context of subgradient methods, 
while
\cite{Bottou2009CuriouslyFC} and \cite{shalev2011pegasos} %
were among the first to discuss this observation in the context of machine learning problems.
They posed an open problem to justify this theoretically.
\cite{Recht11:jellyfish} showed
that for the matrix completion problem, random shuffling can be several orders of magnitude faster.
Similarly, in the context of neural network training, 
it was recommended by \cite{Bengio2012PracticalRF} to shuffle the dataset once and then
use a fixed order of the gradients.
Currently, random reshuffling is standard practice in training deep learning models
\citep{paszke2019pytorch,sun2020optimization}. It is also used routinely for training large language models
\citep{Chowdhery2022PaLM, touvron2023llama}, with a small number of epochs
over the randomly shuffled training data.
Shuffling methods were predated by the incremental gradient methods that pass over the data in a given order~\citep{kohonen1974cyclic,luo1991convergence,grippo1994class,Bertsekas15:incrementalGD}.
Studies on the effects of data ordering on neural network training can be traced back at least to the 1960s~\cite{widrow1960adaptive}. \looseness=-1

\textbf{Theoretical guarantees.} 
There were many attempts and breakthroughs to theoretically explain
the  good empirical behaviour of the 
gradient methods with reshuffling for different problem classes.
Convergence rates of the incremental gradient methods with random reshuffling
for \textit{convex} optimization
are dated back to \citep{nedic2001convergence}
(see also \citep{bertsekas2011incremental}).
Over the recent years, significant attention 
has been dedicated to the \textit{strongly convex} case.
A series of works could establish that RR can converge faster than SGD.
Among the first analyses is \cite{recht2012toward} that focuses on quadratic least squares problem.
Follow-up work also focuses on quadratics or 
relaxes the assumption by imposing second-order smoothness assumptions.
\citet{Gurbuzbalaban15:shuffling,Haochen19:random_shuffle_first,Safran20:random_shuffle_lower_bound} provided the lower bounds for RR and SS for strongly convex functions. Subsequently, 
\citep{Rajput20:shuffle_sgd_quadratics} showed how to match the lower bound for RR when the function~$f$ is quadratic and after a large enough number of epochs. For linear regression \citet{Yun2021CanSS} show that SS is better than RR and better than SGD. 
The second-order smoothness assumption could be dropped in \citep{Jain2019shuffle,Safran21:shuffle}  %
and \cite{Mishchenko2020RandomRS}. The latter makes the observation that introducing a specific notion of variance that takes the random permutation into account could facilitate the analysis in the convex and strongly convex setting.
The results were extended to the non-convex setting under the PL condition \citep{Ahn20:shuffle,Nguyen20:unified_shuffling} 
and for the general \emph{non-convex smooth} setting in \citep{lu2022a:shuffle, Mishchenko2020RandomRS, Mohtashami2022data_order}.
Tighter lower complexity bounds for strongly convex functions and functions satisfying the PL condition were established 
recently in \citep{cha2023tighter}.
In the last years, a significant attention was paid to the distributed and federated stochastic methods with random reshuffling
(see \citep{yun2021minibatch,sadiev2022federated,malinovsky2022server,mishchenko2022proximal,cho2023convergence} and references therein), which are important  
for training large-scale and decentralized models.

Moreover, all of the convergence rates for non-convex functions require the number of epochs to be unreasonably large for random reshuffling to be better than SGD.

\textbf{Arbitrary orderings.}
\citep{lu2022a:shuffle, Mohtashami2022data_order} analysed arbitrary data orderings in SGD (not restricted to permutations), including shuffle SGD and proposed an algorithm of greedy data selection to select data orders that would lead to the fastest convergence, based on their analysis. 
\cite{lu2022grab} proposed a more practical (than greedy order) data order selection that is faster than random shuffle.\looseness=-1

There exist many more applications where different data orderings appear naturally, such as in Markovian SGD \citep{Johansson09:markov,Duchi11:markovmirrordescent,even2023markov} or certain federated learning scenarios~\cite{eichner2019semi}.
\citet{yun2022minibatch} analyse local SGD for distributed learning when every node applies random reshuffling.
Using a provably faster permutation-based example ordering in distributed training was recently studied in \citep{feder2023cd}.\looseness=-1

\section{The Algorithm}

Recall that we study the finite-sum minimization problem 
\begin{align}
    \textstyle\min_{\xx \in \R^d} \Big[f(\xx) := \frac{1}{n}\sum_{i = 1}^n f_i(\xx)\Big] \tag{\ref{eq:problem}}
\end{align}
where $n$ denotes the number of functions. %

While we are mainly interested in analysing convergence properties of the incremental gradient (IG) algorithm, we will study it under the more general framework that allows for \emph{arbitrary} data ordering:
\begin{align} %
\xx_{t + 1} = \xx_t - \gamma \nabla f_{i_t}(\xx_t) \tag{\ref{eq:algo}}
\end{align}
where $t = 0, \dots, T$ and $i_t \in [n] := \{1,\dots,n\}$ denotes the index of the datapoint chosen at iteration~$t$.

In our work we allow for any possible strategies of choosing datapoints orders, i.e.\ sequences of indices $(i_t)_{0 \leq t \leq T} := (i_0, \dots, i_t, \dots, i_T)$---deterministic or random.

The results that we present in this work can be straightforwardly extended to the {mini-batch setting}, 
where in each iteration a mini-batch gradient is computed on a subset of the data. 
In this case, we can denote by $I_t$ a set of indices
taken in iteration $t$ and work with $f_{I_t}(\xx) := \frac{1}{\abs{I_t}} \sum_{j \in I_t} f_j(\xx)$.
However, for simplicity of the presentation, we will mainly focus on the iterations of the form \eqref{eq:algo},
and $i_t$ being just one index.\looseness=-1

Let us list some examples of the popular algorithms covered in our framework \eqref{eq:algo}:
\begin{example}[SGD]\label{ex:sgd}
The classic SGD algorithm~\cite{Robbins51:sgd} samples in each iteration~$t$ the next data sample uniformly at random from $[n]$, i.e.\ $i_t \sim [n]$. %
This means that in one epoch ($n$ iterations), some samples may be selected more than once and others may be missing (sampling with replacement).
Most theoretic analyses of stochastic methods (see, e.g.\ the book by \cite{lan2020first}) are based on this type of sampling, since independently choosing a data point in each iteration significantly facilitates the proofs.
\end{example}

\begin{example}[Incremental Gradient (IG)]\label{ex:ig}
The incremental gradient method passes over the data samples in cycles (which we will also refer to as epochs), i.e.\ $i_t = t \mymod n + 1$. This method has been among the earliest used for training neural networks \citep{kohonen1974cyclic,luo1991convergence,bertsekas2011incremental} but comes with unfavorable worst-case convergence guarantees~\citep{gurbuzbalaban2019convergence}.
\end{example}

\begin{example}[Single Shuffle (SS)]\label{ex:SS}
Single shuffle SGD also passes over the data sample in cycles, but instead of using the predefined order, the data order is determined by a randomly chosen permutation $\pi$ of the index set $[n]$, and we set $i_t = \pi(t \mymod n + 1)$. %
This variant is also sometimes used in practical implementations \citep{Bengio2012PracticalRF}.
\end{example}

\begin{example}[Random Reshuffling (RR)]\label{ex:RR}
SGD with random reshuffling (RR) \cite{nedic2001convergence} is a further popular variant that passes in cycles of length $n$ over the data. The main difference compared to SS is that in RR a \textit{new} permutation $\pi_k$ is drawn uniformly at random at the beginning of each epoch $k$, 
and then $i_t = \pi_k(t\mymod n+1)$,
which is also called \textit{sampling without replacement}. This variant is frequently a default option in training modern neural networks \citep{Goyal17:imagenet1h,paszke2019pytorch}.
\end{example}

The following example is less conventional in the literature, it serves to highlight that our framework also allows orderings that completely disregard parts of the data, leading to biased training.\looseness=-1

\begin{example}[Single Function]\label{ex:singlefunc}
    Since our algorithm \eqref{eq:algo} allows for any orders of the data, it is also allowed to always sample the same (for example the first) function, i.e. $i_t \equiv 1~\forall t$. In this case, we cannot expect the convergence to the exact optima of $f$ in \eqref{eq:problem}, but to the minimizer of $f_1$. We will see that in this case, our theory can quantify the neighbourhood size to which the algorithm converges.
    While this example is only illustrative, it serves as a proxy to a more realistic scenario when some functions might be missing due to e.g.\ being distributed over the nodes/devices that are offline. \looseness=-1%
\end{example}
Any other deterministic or randomized orders are also possible, such as Markovian SGD \citep{Johansson09:markov, even2023markov}.\looseness=-1%

Our analysis will cover arbitrary orders and include all of the examples from Examples~\ref{ex:sgd} to \ref{ex:singlefunc}. However, it is important to note that not all examples will demonstrate improved convergence under our analysis. The primary focus of our study will be on the Incremental Gradient and Single Shuffle SGD algorithms (Examples~\ref{ex:ig}, \ref{ex:SS}), where we notably improve existing convergence guarantees.

\section{Assumptions}

For our theoretical analysis, we will use the following standard assumption on the smoothness of functions (see, e.g.~\cite{nesterov2018lectures}).\looseness=-1
\begin{assumption}\label{as:smooth}
Each of $f_i$ is $L$-smooth, i.e.
$\norm{\nabla f_i(\xx) - \nabla f_i(\yy)} \leq L \norm{\xx - \yy}\,, ~~\forall \xx,\yy \in \R^d\,.$
\end{assumption}
\subsection{Quantifying the data orders}
Different data orders $(i_0, \dots, i_T)$ in Algorithm~\eqref{eq:algo} lead to the different practical performance and the different convergence rates. Consider the two examples: (i) $i_t$ is chosen uniformly at random from $[n]$, corresponding to the SGD Example~\ref{ex:sgd} and (ii) $i_t$ is always chosen to be the first function $i_t\equiv 1$, as in Example~\ref{ex:singlefunc}. Clearly while SGD can effectively minimize the original objective function $f$ defined in \eqref{eq:problem}, the second algorithm converges to the minimum of $f_1$, and might not converge to the minima of $f$ in general, if $f_1 \neq f$.\looseness=-1

In order to provide a unified analysis that captures the convergence of arbitrary data ordering in Algorithm~\eqref{eq:algo}, we need to introduce a quantity that measures how does the datapoint order $(i_0, \dots, i_T)$ affect the convergence rate. In this work, we propose to use the following quantity
\begin{definition}[Sequence correlation]
\label{def:main_variance}
Let $(i_0,\dots,i_T)$ denote a (possibly random) sequence of indices $i_t \in [n], t=0,\dots, T$. 
For a given  $\tau \geq 1$ (that we call \emph{effective correlation time}), we divide the full sequence  into  $\big\lfloor \frac{T}{\tau}\big\rfloor$ consecutive chunks of size $\tau$. We call these chunks as \emph{correlated periods}. We further define for every period $k=0, \dots, \big\lfloor \frac{T}{\tau}\big\rfloor$, the sequence variance as
    \begin{align}
        &\sup_{\xx\in\R^d}\max_{\substack{k = 0, \dots, \lfloor \frac{T}{\tau}\rfloor \\ j=0, \dots, \tau-1}} \E\left[\phi_{k\tau + j}(\xx) \Big| ~ i_0, \dots, i_{k \tau - 1}\right]\leq \sigmat
    \end{align}
    where 
    \begin{align}
        \phi_{k\tau + j}(\xx) = \Bigg\|\sum_{t = k \tau}^{\min\{k \tau + j, T\}} \left(\nabla f_{i_t}(\xx) - \nabla f(\xx)\right) \Bigg\|^2\nonumber
    \end{align}
    where the expectation is taken over the choice of the random sequence $(i_t)_{k\tau \leq t \leq T}$ conditioning on the past sequence order $(i_t)_{t < k\tau}$.
    We note that $\sigma_{\tau}^2$ depends on the distribution from which the sequence $(i_t)_{0 \leq t \leq T}$ is drawn, 
    but we omit this dependence in the text when it is clear from the context.
\end{definition}
For the special case of the correlated periods of size one, i.e. $\tau = 1$ and indices sampled uniformly at random, $i_t \sim [n]$, this definition recovers the standard SGD variance as used in~\eqref{eq:sigmasgd} below.
When considering $\tau > 1$, this measure can capture the joint effect of sequence order and the gradients of the individual functions. This measure shows how close the average of prefix of individual gradients $\nabla f_{i_t}(\xx)$ stays to the full gradient $\nabla f(\xx)$ during the period of $\tau$ consecutive steps.\looseness=-1

\subsection{Comparison to the prior quantities}
We will now explore how the sequence correlation quantity we introduced earlier, as defined in Definition~\ref{def:main_variance}, relates to similar concepts in existing literature. Additionally, we will discuss the motivations behind proposing this new sequence correlation approach.\looseness=-1

We first start with comparing our sequence correlation with classic variance assumptions that appear in the analysis of stochastic gradient methods \citep{Bubeck15:convexopt,lan2020first}.
\paragraph{Classic bounded variance assumption.}

One of the most common assumptions used in analysing SGD-type algorithms \citep{Lan12:opt, Dekel12:minibatch} is bounded variance of the gradients. They assume that $\sigmasgd$ is finite, where $\sigmasgd$ is defined as\vspace{-1mm}
\begin{align}
   \sigmasgd := \textstyle\sup_{\xx \in \R^d} \E_i \norm{\nabla f_i(\xx) - \nabla f(\xx)}^2\,,
   \label{eq:sigmasgd}
\end{align}
and the expectation is taken over the uniform choice of the index $i \sim [n]$ (note that mini-batches are also allowed, effectively dividing this variance quantity by the batch size).

While this measure $\sigmasgd$ is good in quantifying the behaviour of SGD when the datapoints are sampled with replacement, it weighs every function $i$ uniformly.
Thus it cannot successfully capture the effect of different orders in \eqref{eq:algo} on 
the optimization trajectory, when the functions are sampled non-uniformly.

\begin{example}[Example when \eqref{eq:sigmasgd} fails, \cite{Mohtashami2022data_order}]\label{ex:sigmasgdfail}
As an example, consider $n$ univariate functions ($x \in \R$) defined as $f_i(x) = \frac{1}{2}(x - 1)^2$, 
for the first half: $1 \leq i \leq \frac{n}{2}$, 
and for the second half we define $f_i(x) = \frac{1}{2}(x + 1)^2$, $\frac{n}{2} < i \leq n$, with $n$ being an even number.
Let us consider two instances of algorithm \eqref{eq:algo}.
In the first instance, the method cycles through functions in the initial order $f_1 ,\dots, f_n$ without shuffling,
and in the second instance it cycles in the following order:
$f_1, f_{\frac{n}{2} + 1}, f_2, f_{\frac{n}{2} + 2}, \dots, f_n$.
In the second case the algorithm will converge faster than in the first one. However, by assuming only 
smoothness (\cref{as:smooth}) and the classic boundedness of variance \eqref{eq:sigmasgd}, there is no way to distinguish the two cases, 
as these assumptions do not hold any information about the chosen order. \looseness=-1
\end{example}

This example motivates the need to couple the variance assumption in~\eqref{eq:sigmasgd} 
with the order of the gradients used in the method.\looseness=-1

\paragraph{Variance assumptions that take the data-ordering into account.}

Several recent works
\citep{lu2022a:shuffle, Mohtashami2022data_order, Mishchenko2020RandomRS}
proposed assumptions that depend on the data order.

Most prior works considered correlation periods of a fixed size $\tau = n$. For instance, \citep{Mohtashami2022data_order} define a finite constant $\sigmaepoch$ (assuming it is finite $\sigmaepoch < \infty$), such that
\begin{align} 
     \textstyle\sigmaepoch := \sigman \,.\label{eq:sigmaepoch}
\end{align}
A very similar assumption was used by \cite{lu2022a:shuffle} where they allow the staring point to be an arbitrary index, but the correlation distance $\tau$ can be as large as $T$. %
We note that both these prior works impose an additional assumption that considered data orderings that do not deviate much from the full gradient, while we do not impose such conditions. %
Both of the works considered a refined growth condition, which we do not consider in this work for simplicity.

Let us explain how an epoch based assumption $\sigmaepoch < \infty$ can help in characterising the order of the gradients as compared to 
the classic variance $\sigmasgd$.

\begin{example}
Indeed, in Example~\ref{ex:sigmasgdfail} above we could not distinguish the two different sequences 
using only the variance $\sigmasgd$. 
However, for the first sequence, we can use a bound\looseness=-1
$$
\ba{rcl}
\sigmaepoch\bigl[ (1, 2, \ldots, n) \bigr] &=& \frac{n^2}{8} \sigmasgd,
\ea
$$
while in the second case, we can set
$$
\ba{rcl}
\sigmaepoch\bigl[ (1, \frac{n}{2} + 1, 2, \frac{n}{2} + 2, \ldots, n) \bigr] &=& \frac{1}{2}\sigmasgd.
\ea
$$
\end{example}
Hence, employing the global bound~\eqref{eq:sigmaepoch}
on our sequence variance $\sigma_{k, \tau}^2$ for $\tau = n$, we can clearly quantify the effect
of using different orders of the gradients. As we explain in the next example, it still has some limitations.

\begin{example}[Example when Assumption~\eqref{eq:sigmaepoch} fails]

Consider a \emph{single function} algorithm described in Example~\ref{ex:singlefunc}, i.e.\ $i_t \equiv 1~\forall t$, then the variance of any function $i \neq 1$ is irrelevant, as the algorithm never sees them.
For this sequence, let us define $\sigmaone = \sigma^2_{k,1}$ (for an arbitrary index $k\leq T$). We can observe $\sigma^2_{k,n} = n^2 \sigmaone$. 
However, the number of functions $n$ should not matter for convergence properties of this algorithm, as the algorithm sees only one function. \looseness=-1
\end{example}

\subsection{Our observation on effective correlation time}\label{sec:variance}

However, the main limitation of the prior work and of considering the epoch-based variance $\sigmaepoch$ is not because it cannot capture the convergence behaviour of a single function Example~\ref{ex:singlefunc}. It is because of its limitations when analysing \emph{incremental gradient methods}---methods of the main interest in this work. 

Due to the technical reasons, analysing theoretical convergence of \eqref{eq:algo} using the variance $\sigmat$ with the correlation period $\tau$ puts a constraint on the choice of stepsize as $\gamma < \cO \left(\nicefrac{1}{L \tau}\right)$. \looseness=-1

Setting the correlation period $\tau = n$, as the prior works did \citep{lu2022a:shuffle, Mohtashami2022data_order, Mishchenko2020RandomRS}, therefore limits the stepsize as $\gamma < \cO\left(\nicefrac{1}{L n}\right)$. The small stepsize means small progress at every iteration of algorithm \eqref{eq:algo}, slowing down its convergence behaviour, especially when $n$ is large. Mathematically this results in the $n$ times slow down in the first term of convergence: $\cO\left(\nicefrac{{\color{mydarkred} \boldsymbol{n} }}{\epsilon}\right)$ term in the convergence rate (see Table~\ref{tab:rates}).

In this work, we avoid such a restriction of the stepsize by adaptively choosing the correlation period $\tau$ instead of fixing it to $n$. In our work, the correlation period $\tau$ is not chosen in advance, but it adapts to the stepsize $\gamma$ and the smoothness constant $L$ as $\tau = \Theta\left(\nicefrac{1}{L \gamma}\right)$ so that in the end the stepsize $\gamma$ becomes unrestricted by $\tau$, but instead $\tau$ becomes restricted by the stepsize $\gamma$. This allows us to improve $n$ times the first term of convergence to $\cO\left(\nicefrac{{\color{mydarkgreen} \boldsymbol{1} }}{\epsilon}\right)$ (see Table~\ref{tab:rates}).

\paragraph{Intuitive informal explanation.} Intuitively, effective correlation time $\tau$ should capture how many iterations one needs to perform to move "sufficiently far" from the initial point. This measure should depend only on the stepsize $\gamma$ and the smoothness constant $L$ of functions $f_i$. Indeed, for the larger stepsizes, every step of Algorithm~\eqref{eq:algo} will make the larger progress, and the iterates $\xx_t$ and $\xx_{t  +1}$ will be further apart, reducing the effective correlation time. Similarly, the larger the smoothness constant $L$, the faster the function $f$ changes, and the smaller the distance one need to have to make the iterations uncorrelated for a given $f$ (decreasing effective correlation time). Therefore, $\tau = \Theta\left(\nicefrac{1}{L \gamma}\right)$.

\begin{table}[t]
\centering
\resizebox{0.8\linewidth}{!}{
\begin{tabular}{l|c}
\toprule
Algorithm & Upper bound on $\sigmat$ \\ \midrule
SGD, Ex.~\ref{ex:sgd} & $\tau \sigmasgd$\\
RR, Ex~\ref{ex:RR} & $4 \min\{\tau, n \} n \sigmasgd$ \\
IG and SS, Ex.~\ref{ex:ig}, Ex~\ref{ex:SS} & $\min\{\tau, n\} n \sigmasgd$\\
Single function, Ex.~\ref{ex:singlefunc} & $\tau^2 \sigmaone$\\
\bottomrule
\end{tabular}
}
\caption{Upper bounds on $\sigmat$ for notable special cases.}
\label{tab:sigmas}
\vspace{-5mm}
\end{table}

\section{Main theorem}

In this section we present our main theoretical result.
\begin{theorem}\label{thm:main}
Let each of the functions $f_i$ be $L$-smooth (\Cref{as:smooth}). 
Let the stepsize $\gamma$ in Algorithm~\eqref{eq:algo} satisfy: $\gamma \leq \frac{1}{8 \sqrt{3} L}$. Let $\tau = \bigl\lfloor\frac{1}{8 \sqrt{3} L \gamma} \bigr\rfloor$, and assume that $\sigmat$ from Def.~\ref{def:main_variance} is finite,
for $k = 0, \dots, \big\lfloor \frac{T}{\tau}\big\rfloor$, where
$T$ is a number of iterations. Then,  Algorithm~\eqref{eq:algo} converges at the rate:
\begin{align*}
    \frac{1}{T} \sum_{t = 0}^T \E \norm{\nabla f(\xx_t)}^2 \leq \cO\bigg( \frac{F_0}{\gamma T} + L^2 \gamma^2 \sigmat %
    \bigg)%
    \,,
\end{align*}
where $F_0 = f(\xx_0) - f^\star$. 
\end{theorem}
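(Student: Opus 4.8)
The plan is to run the whole analysis \emph{chunk by chunk}, a chunk being a block of $\tau=\lfloor\tfrac1{8\sqrt3L\gamma}\rfloor$ consecutive iterations, and to use the anchor point $\aa_k:=\xx_{k\tau}$ at the start of chunk $k$. First note that $f=\tfrac1n\sum_i f_i$ is itself $L$-smooth, so I can apply the descent inequality \emph{across an entire chunk}: for chunk $k$ (of length $m\le\tau$, the last one possibly shorter) I would write $f(\xx_{k\tau+m})\le f(\aa_k)+\langle\nabla f(\aa_k),\xx_{k\tau+m}-\aa_k\rangle+\tfrac L2\|\xx_{k\tau+m}-\aa_k\|^2$ and substitute $\xx_{k\tau+m}-\aa_k=-\gamma\sum_{t=0}^{m-1}\nabla f_{i_{k\tau+t}}(\xx_{k\tau+t})$. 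The reason for using the \emph{chunk-wise} descent lemma (rather than summing per-step inequalities) is that it produces the term $\tau\|\nabla f(\aa_k)\|^2$ with an explicit factor $\tau$; this slack is what lets every error term built from the sequence correlation inherit a $\Theta(1/\tau)$ factor through Young's inequality, which turns out to be exactly what the target rate needs.

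Next I would decompose each stochastic gradient around the anchor as $\nabla f_{i_{k\tau+t}}(\xx_{k\tau+t})=\nabla f(\xx_{k\tau+t})+\tilde\ee_t+\beta_t$, where $\tilde\ee_t:=\nabla f_{i_{k\tau+t}}(\aa_k)-\nabla f(\aa_k)$ so that $\|\sum_{t=0}^{j}\tilde\ee_t\|^2=\phi_{k\tau+j}(\aa_k)$ exactly (the link to Definition~\ref{def:main_variance}), and $\|\beta_t\|\le 2L\|\xx_{k\tau+t}-\aa_k\|$, $\|\nabla f(\xx_{k\tau+t})-\nabla f(\aa_k)\|\le L\|\xx_{k\tau+t}-\aa_k\|$ by $L$-smoothness. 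Plugging this in, expanding the inner product $\langle\nabla f(\aa_k),\cdot\rangle$, applying Young's inequality with a weight of order $\tau$ to the pairings with $\sum_t\tilde\ee_t$ and with the drift sums, Cauchy--Schwarz to the drift sums, and using $\gamma L\tau\le\tfrac1{8\sqrt3}$ throughout (including in $\tfrac L2\|\xx_{k\tau+m}-\aa_k\|^2$), I expect a per-chunk inequality of the form
\begin{align*}
f(\xx_{k\tau+m})\;\le\; f(\aa_k)\;-\;c_1\gamma\tau\|\nabla f(\aa_k)\|^2\;+\;\tfrac{c_2\gamma}{\tau}\,\phi_{k\tau+m-1}(\aa_k)\;+\;c_3\gamma L^2\mathcal D_k ,
\end{align*}
with absolute constants, $c_1$ bounded away from $0$, where $\mathcal D_k:=\sum_{t=0}^{m-1}\|\xx_{k\tau+t}-\aa_k\|^2$ is the in-chunk drift.

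The main work — and the main obstacle — is to bound $\mathcal D_k$ self-referentially. Writing $\xx_{k\tau+t}-\aa_k=-\gamma\sum_{s<t}\nabla f_{i_{k\tau+s}}(\xx_{k\tau+s})$, using the same anchor decomposition to split $\|t\nabla f(\aa_k)+\sum_{s<t}(\nabla f(\xx_{k\tau+s})-\nabla f(\aa_k))+\sum_{s<t}\beta_s+\sum_{s<t}\tilde\ee_s\|^2$, and summing over $t$, the $\mathcal D_k$-proportional terms on the right carry a coefficient $\Theta(\gamma^2L^2\tau^2)\le\tfrac1{192}$ and can be moved to the left, yielding $\mathcal D_k\le c_4\gamma^2\tau^3\|\nabla f(\aa_k)\|^2+c_5\gamma^2\sum_{t=0}^{m-1}\phi_{k\tau+t}(\aa_k)$. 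Substituting back into the per-chunk inequality: the gradient term stays controlled because $\gamma^3L^2\tau^3=\gamma\tau(\gamma L\tau)^2\le\gamma\tau/192$, and — crucially — the $\phi$-sum now appears multiplied by $\gamma^3L^2=\gamma(\gamma L)^2\le\gamma/(192\tau^2)$, i.e.\ with the extra $1/\tau^2$ suppression that a naive argument misses (a bare $\gamma\sum_t\phi_t$ term would be $\Theta(\tau^2)$ too large to close). Converting $\tau\|\nabla f(\aa_k)\|^2$ back to $\sum_{t=0}^{m-1}\|\nabla f(\xx_{k\tau+t})\|^2$ via $\|\nabla f(\xx_{k\tau+t})\|^2\le2\|\nabla f(\aa_k)\|^2+2L^2\|\xx_{k\tau+t}-\aa_k\|^2$ and re-absorbing the resulting drift, I obtain, for absolute constants $c,c'$,
\begin{align*}
f(\xx_{k\tau+m})\;\le\; f(\aa_k)\;-\;c\gamma\sum_{t=0}^{m-1}\|\nabla f(\xx_{k\tau+t})\|^2\;+\;c'\gamma\Big(\tfrac1\tau\phi_{k\tau+m-1}(\aa_k)+\tfrac1{\tau^2}\sum_{t=0}^{m-1}\phi_{k\tau+t}(\aa_k)\Big).
\end{align*}

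Finally I would take $\E[\,\cdot\mid i_0,\dots,i_{k\tau-1}]$: since $\aa_k=\xx_{k\tau}$ is measurable with respect to the conditioning $\sigma$-algebra, Definition~\ref{def:main_variance} gives $\E[\phi_{k\tau+j}(\aa_k)\mid i_0,\dots,i_{k\tau-1}]\le\sigma^2_\tau$ for all $j$, so the error term is at most $2c'\tfrac\gamma\tau\sigma^2_\tau$. Taking total expectation, summing over the $\lfloor T/\tau\rfloor+1$ chunks (the $\phi$'s of the last, shorter chunk are already truncated at $T$ in Definition~\ref{def:main_variance}), and using $f(\xx_{T+1})\ge f^\star$ gives $c\gamma\sum_{t=0}^{T}\E\|\nabla f(\xx_t)\|^2\le F_0+\cO(T/\tau)\cdot\tfrac\gamma\tau\sigma^2_\tau$; dividing by $c\gamma T$ and using $\tau=\Theta(1/(L\gamma))$, hence $1/\tau^2=\Theta(L^2\gamma^2)$, yields $\tfrac1T\sum_{t=0}^{T}\E\|\nabla f(\xx_t)\|^2\le\cO\!\big(F_0/(\gamma T)+L^2\gamma^2\sigma^2_\tau\big)$, which is the claim. (If $\gamma$ is so large that $\tau=1$, chunks are single iterations and the standard one-step descent argument gives the bound directly, consistent with the formula since then $L\gamma=\Theta(1)$.) The one genuinely delicate point is the drift bound together with the constant-chasing that glues drift$\to$descent$\to$telescope: one must check that after the nested substitutions the coefficient of $\sum_t\|\nabla f(\xx_t)\|^2$ is still a positive absolute constant while every occurrence of $\phi$ carries a $\Theta(1/\tau)$ (or $\gamma L$) factor — which is precisely what pins down $\tau=\Theta(1/(L\gamma))$ and the numerical constant $8\sqrt3$.
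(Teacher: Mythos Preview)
Your proposal is correct and would yield the theorem, but it is organized differently from the paper's proof. The paper does \emph{not} apply the descent lemma across an entire chunk on the real sequence. Instead, it introduces a \emph{virtual (perturbed) sequence with restarts}: $\tilde\xx_t$ follows full-gradient steps $\tilde\xx_{t+1}=\tilde\xx_t-\gamma\nabla f(\xx_t)$ inside each chunk and is reset to the real iterate, $\tilde\xx_{t+1}=\xx_{t+1}$, at chunk boundaries. Smoothness is then applied \emph{one step at a time} to $f(\tilde\xx_t)$, with separate bookkeeping for restart vs.\ non-restart steps (the restart step generates extra terms $T_2$--$T_5$, and in particular a term $\tfrac{1}{L}\|\nabla f(\tilde\xx_t)\|^2$ that must be spread back over the whole chunk). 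The key technical lemma (the paper's Lemma in the appendix) bounds $\|\tilde\xx_t-\xx_t\|^2=\gamma^2\|\sum_{j=r(t)}^{t-1}(\nabla f(\xx_j)-\nabla f_{i_j}(\xx_j))\|^2$ via the same self-referential drift argument you use for $\mathcal D_k$; so the drift machinery is shared, but the outer skeleton is different.

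What each approach buys: the virtual-sequence route produces the negative term $-\tfrac{\gamma}{2}\sum_t\|\nabla f(\xx_t)\|^2$ directly from the per-step descent, at the cost of a case split and the somewhat delicate handling of the restart step. Your chunk-wise route is more elementary (no auxiliary sequence, no case split) but requires the extra conversion $\tau\|\nabla f(\aa_k)\|^2\to\sum_t\|\nabla f(\xx_{k\tau+t})\|^2$ and re-absorption of the resulting drift. One small point to tighten in your write-up: in the last (possibly short) chunk of length $m<\tau$, the Young weight and the drift bound should be phrased with $m$ rather than $\tau$ (i.e.\ $\mathcal D_k\le c_4\gamma^2 m^3\|\nabla f(\aa_k)\|^2+\ldots$), so that the positive $\gamma L^2\mathcal D_k$ term is $O(\gamma m/192)\|\nabla f(\aa_k)\|^2$ and is absorbable into $-c_1\gamma m\|\nabla f(\aa_k)\|^2$; otherwise the stated per-chunk inequality with $\tau$ in place of $m$ does not close for a very short final chunk.
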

We give complete proof of this theorem in the Appendix. %

The convergence rate consists of two terms. The first term is the optimization term, and it recovers the tight convergence rate of noiseless gradient descent ($\sigmat = 0$). The second term is the most interesting one, it shows the effect of chosen data orders in Algorithm~\eqref{eq:algo}. The convergence rate depends on the sequence variance $\sigmat$. %

We note that since $\sigmat$ depends on $\tau$, which in itself depends on the stepsize $\gamma$, we cannot directly tune the stepsize in the upper bound.
However, in several notable special cases we show how one can we tune the stepsize by estimating $\sigmat$ (see Table~\ref{tab:sigmas}), and get the final convergence rate. 

\subsection{Implications for the Special Cases}
\paragraph{SGD, Ex.~\ref{ex:sgd}} Since $\sigmat \leq \tau \sigmasgd$ (see Table~\ref{tab:sigmas}), %
and using that $\tau = \Theta\big(\nicefrac{1}{L \gamma}\big)$ the convergence rate in Theorem~\ref{thm:main} converts to 
\begin{align}\label{eq:sgd_rate}
    \textstyle \frac{1}{T} \sum_{t = 0}^T \E \norm{\nabla f(\xx_t)}^2 \leq \cO\big( \frac{F_0}{\gamma T} + L \gamma \sigmasgd \big) \,, &&
\end{align}
with $\gamma \leq \textstyle\frac{1}{12 L}$.
This recovers classical convergence rate of SGD for non-convex functions (up to constants). In particular, after tuning the learning rate, we get the convergence of $\cO\bigl(  \nicefrac{L F_0}{T} + \sqrt{\nicefrac{L F_0 \sigmasgd }{T}}\bigr)$. %

Importantly, the previous works \cite{Mohtashami2022data_order, lu2022a:shuffle} on analysing arbitrary ordered sequences in SGD could not recover the tight convergence rates in these cases. \looseness=-1

\paragraph{Incremental Gradient and Single Shuffle, Ex.~\ref{ex:ig}, Ex.~\ref{ex:SS}} Substituting $\sigmat \leq \min\{\tau, n\} n \sigmasgd$, we get that $\frac{1}{T} \sum_{t = 0}^T \E \norm{\nabla f(\xx_t)}^2$ is smaller than
\begin{align}
\label{eq:ig_rate}
    \textstyle\cO\big( \frac{F_0}{\gamma T} + L \gamma \min\{1, L \gamma n \} n \sigmasgd\big) \,,
\end{align}
with $\gamma \leq \frac{1}{12 L}$. 
Thus, after the stepsize tuning, we get the rate of $\cO \left( \nicefrac{L  F_0}{T} + \min \left\{ \left(\nicefrac{L F_0 n \sigma_{\operatorname{SGD}} }{T}\right)^{\frac{2}{3}}, \sqrt{\nicefrac{L F_0 n \sigmasgd}{T}} \right\}\right)$ which is strictly better for both of the terms than the previously best known bound $\cO \left( \nicefrac{L F_0 n}{T} + \left(\nicefrac{L F_0 n\sigma_{\operatorname{SGD}}}{T}\right)^{\frac{2}{3}} \right)$ \citep{Mohtashami2022data_order, lu2022a:shuffle, Mishchenko2020RandomRS}.\looseness=-1

\paragraph{Single Function, Ex.~\ref{ex:singlefunc}} We now show that our convergence rate can give tight guarantees when the optimization is always performed on a single function $i_t \equiv 1$. As discussed above, in this case we have $\sigmat = \tau^2 \sigma_1^2$, and thus we can guarantee the convergence of
\beq \label{eq:single_func_rate}
\ba{rcl}
        \frac{1}{T} \sum_{t = 0}^T \E \norm{\nabla f(\xx_t)}^2 
        & \!\! \leq \!\! &
        \cO\big( \frac{F_0}{\gamma T} + \sigmaone\big)\,, 
\ea
\eeq
with $\gamma \textstyle\leq \frac{1}{12 L}$.
It provides the method with convergence only to the neighbourhood of size $\sigmaone$, which cannot be reduced.
This is expected, as the algorithm has no way to learn information about the full function~$f$ \citep{ajalloeian2020convergence}. The same convergence rate could be simply obtained by noticing that $\norm{\nabla f(\xx_t)}^2 \leq  2 \norm{\nabla f_1(\xx_t)}^2 + 2 \norm{\nabla f(\xx_t) - \nabla f_1(\xx_t)}^2 \leq 2 \norm{\nabla f_1(\xx_t)}^2 + 2 \sigmaone$ for all $\xx_t$ and that $\frac{1}{T} \sum_{t = 0}^T \norm{\nabla f_1(\xx_t)}^2 \leq \cO\left( \nicefrac{F_0}{\gamma T}\right)$ as the algorithm performs a full gradient descent on function~$f_1$ that is $L$-smooth. In the Appendix we show that this convergence rate is in fact tight.  %

This example serves to highlight that our Theorem~\ref{thm:main} can capture tightly the cases even when the selected orders $i_t$ result in the gradients that are always far from the true gradient $\nabla f(\xx)$. \looseness=-1

\subsection{Random Reshuffling}
Algorithm~\eqref{eq:algo} and our Theorem~\ref{thm:main} cover arbitrary orderings of the data, including random reshuffling (RR) algorithm that cycles through the dataset, reshuffling the order of the data after every epoch (Ex.~\ref{ex:RR}).

Applying the bound on $\sigmat$ from Table~\ref{tab:sigmas}, and tuning the stepsize $\gamma$ the convergence rate in Theorem~\ref{thm:main} converts to
\begin{align*}
    \cO \left( \nicefrac{L  F_0}{T} + \min \left\{ \left(\nicefrac{L F_0 n \sigma_{\operatorname{SGD}} }{T}\right)^{\frac{2}{3}}, \sqrt{\nicefrac{L F_0 n \sigmasgd}{T}} \right\}\right)
\end{align*}
The prior best known convergence rate of random reshuffling algorithm is \citep{Mohtashami2022data_order, lu2022a:shuffle, Mishchenko2020RandomRS}
$
    \cO \left( \nicefrac{L  F_0 {\color{mydarkred}\boldsymbol{n}}}{T} +  \left(\nicefrac{L F_0 {\color{mydarkgreen} \boldsymbol{\sqrt{n}}} \sigma_{\operatorname{SGD}} }{T}\right)^{\frac{2}{3}}\right),
$
which is $n$ times worse than our rate in the fist term, however it is $\sqrt{n}$ better in the leading stochastic term. It remains an open question whether it is possible to remove $n$ from the first term of convergence without negatively affecting the leading stochastic term.\looseness=-1

\begin{figure*}[h!]
    \vspace{-0.8em}
    \centering
    \includegraphics[width=1.0\linewidth]{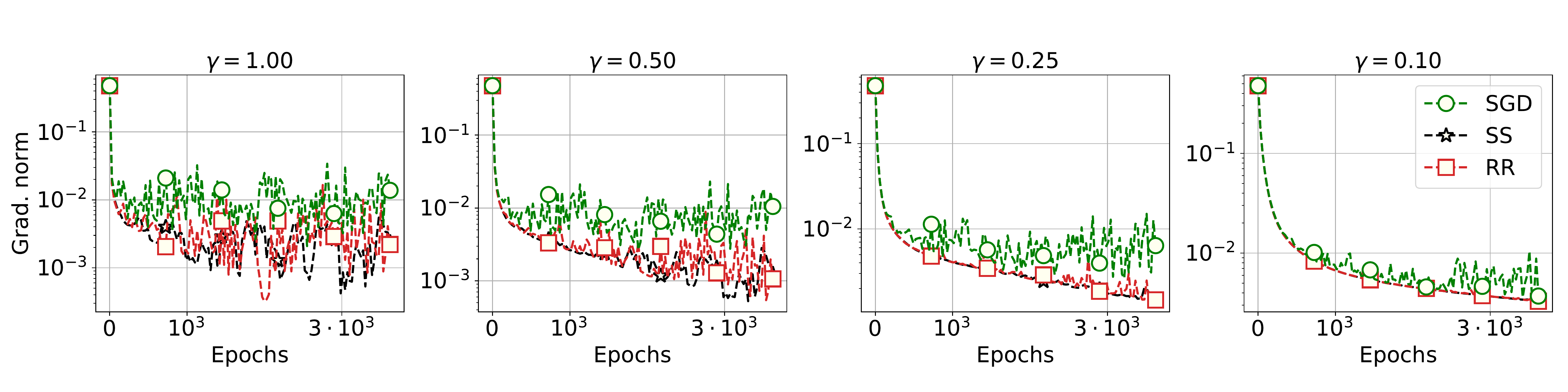}
    \vspace{-2em}
    \caption{\small Convergence curves for logistic regression on the \texttt{Australian} dataset \cite{chang2011libsvm}.
    Random Reshuffling (RR) and Single Shuffling (SS) are faster than SGD across varying learning rates.}
    \label{fig:LogReg}
\end{figure*}
\section{Experiments}
\label{SectionExperiments}

In this section, we present illustrative numerical experiments comparing different strategies for selecting stochastic gradients: SGD (sampling gradients with replacement), 
Single Shuffle (SS, using one random permutation for all epochs),
and Random Reshuffling (RR, generating permutation for each new epoch).
We demonstrate that both of shuffle strategies are not only beneficial due to simpler and faster implementations, but also achieve comparable or even better convergence than plain SGD. %

\paragraph{Quadratic objectives.}

We first consider synthetic quadratic functions of the form 
$f_i(\xx) = 
\frac{1}{2} \la \mA \xx, \xx\ra - \la \bb, \xx \ra + \la \uu_i, \xx \ra$, for
$1 \leq i \leq n$,
where $\mA = \mA^{\top} \succeq 0$ is a given matrix and $\bb \in \R^d$ is a fixed vector,
while vectors $\{ \uu_i \}_{i = 1}^n$ are generated 
randomly and rescaled to be zero-mean $\sum_{i = 1}^n \uu_i  =  \0$, 
and have the desired variance $\sigmasgd = 0.01$.\looseness=-1

In \cref{fig:quadr_converg} we depict convergence curves for various learning rates $ \gamma \in \{5 \cdot 10^{-2}, 10^{-2} , 5 \cdot 10^{-1}\}$. In all the settings, both of the shuffle variants (SS and RR) outperform classical SGD, with the improvement being larger when the stepsize is smaller. This correspond to our theory in \cref{eq:ig_rate}, as the smaller $\gamma$, the larger the difference between $L \gamma \min\{1, L \gamma n\} n \sigmasgd$ (incremental gradient rate) and $L \gamma \sigmasgd $ (SGD rate). 

\begin{figure*}[t!]\vspace{-2mm}
    \centering
    \includegraphics[width=1.0\linewidth]{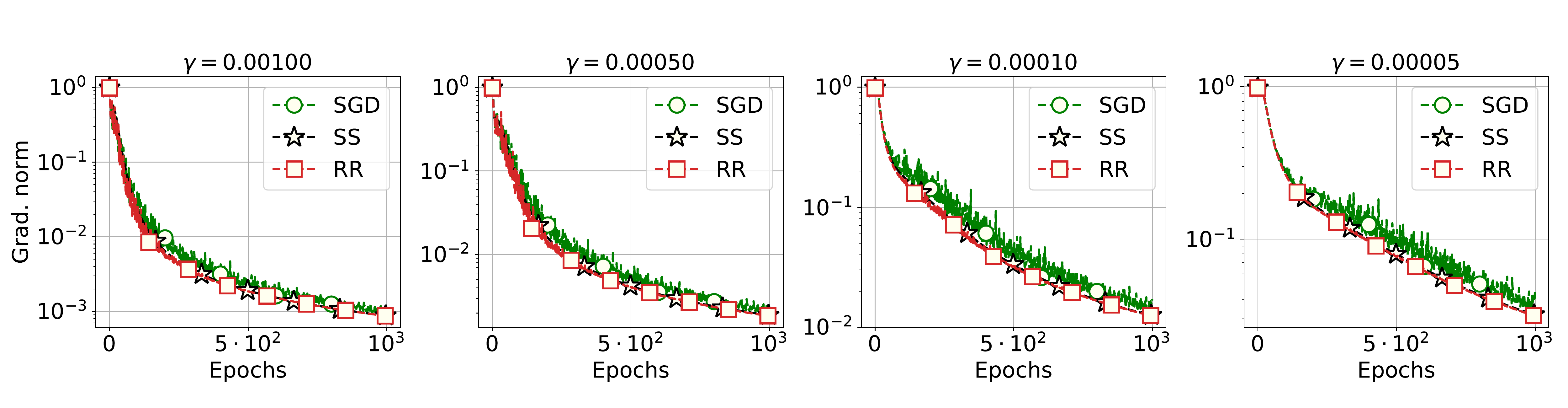}
    \vspace{-2em}
    \caption{\small Training 
    the neural network model on a subset of MNIST dataset of size $1000$.
    Random Reshuffling (RR) and Single Shuffling (SS) are better than SGD. }
    \label{fig:NN}
\end{figure*}
\begin{figure*}[h!]\vspace{-2mm}
    \centering
    \includegraphics[width=1.0\linewidth]{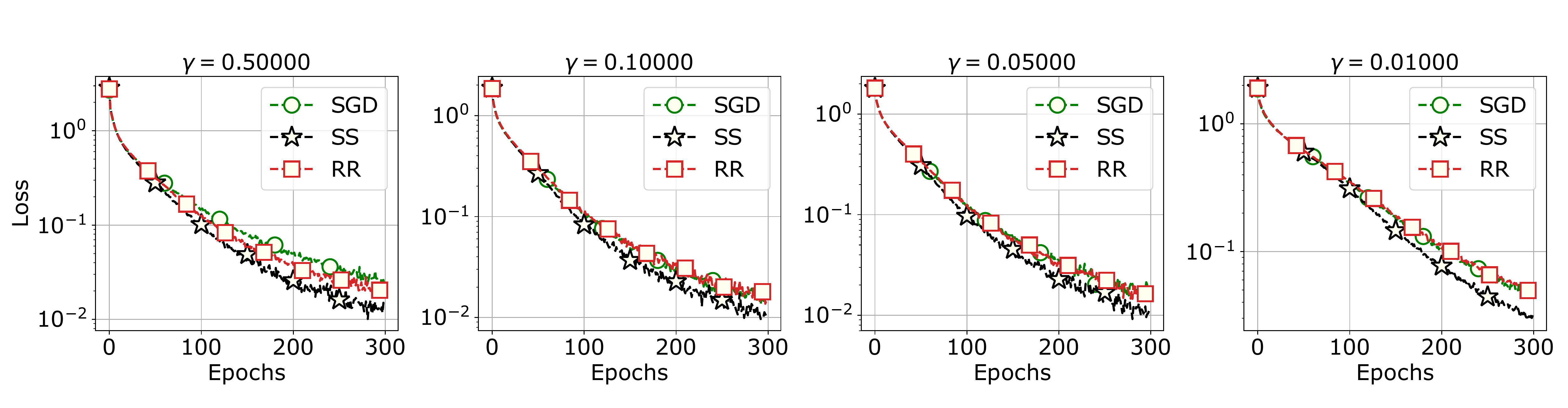}\vspace{-2em}
    \caption{\small Training the neural network model on CIFAR dataset. Single Shuffle (SS) shows the best performance.}
    \label{fig:Cifar}\vspace{-2mm}
\end{figure*}

In \cref{fig:quadr_n} we additionally investigate the convergence properties of SS, RR and SGD across varying the number of components $n$ in \eqref{eq:problem}. We fix the target gradient norm to $\epsilon = 10^{-2}$ and measure how many iterations $T_{\min} (n, \epsilon)$ it takes for each of the methods to achieve this target accuracy. We tune the stepsize over the fixed grid separately for each method, and for each $n$. We repeat each experiments $30$ times %
and plot the mean and 95\% confidence intervals for $T_{\min}(n, \varepsilon)$. \looseness=-1

We can see that both RR and SS \textit{outperform} the classical SGD
and the improvement is the most significant for smaller values of $n$.\looseness=-1 %

\paragraph{Logistic regression.}

In \cref{fig:LogReg}, we consider the problem of training the logistic regression model on machine learning datasets from \citep{chang2011libsvm}\footnote{\href{https://www.csie.ntu.edu.tw/~cjlin/libsvmtools/datasets/}{www.csie.ntu.edu.tw/~cjlin/libsvmtools/datasets/}.}. 
The objective components are convex and of the following form
$f_i(\xx) = \log(1 + e^{ - y_i \la \aa_i, \xx \ra } )$
where $\{ \aa_i \}_{i = 1}^n$ are the feature vectors from $\R^d$ and $\{ y_i \}_{i = 1}^n$ are the labels from $\{ \pm 1 \}$. 
We compare RR, SS and SGD across four different learning rates $\gamma \in \{1, 0.5, 0.25, 0.1\}$.
The results are shown in Fig.~\ref{fig:LogReg} and in the Appendix.
We see that the Random Shuffle (RR) and Single Shuffle (SS)
methods show significantly better performance 
than SGD, in all considered cases.
Therefore, in practice, it seems indeed reasonable to always use 
the shuffle strategies instead of the classical sampling with 
replacement.\looseness=-1

\paragraph{Neural network training.}

We train a three-layer neural network
(one convolutional layer and two fully-connected layers with $\tanh$ activation functions)
with the total number of parameters $d = 140697$.
For our data we use a random subset of MNIST dataset of size $1000$.
After each epoch, we evaluate the norm of the full gradient 
for the entire model. The result are shown in Fig.~\ref{fig:NN}.
We see that the Random Reshuffling (RR) 
works at least as well as SGD or even better. The smaller the learning rate $\gamma$, the larger the improvement as predicted by our theory.
We see that the convergence of RR is also more stable.\footnote{In the Appendix, we provide full details on our experimental setup, computational environment, network architectures, as well as additional experiments.}

Finally, we show experimental results on CIFAR dataset with \texttt{resnet18}
architecture \cite{he2016deep}
(the total number of parameters is $d = 11181642$).
In our training, we sample batches of a fixed size $256$,
comparing different strategies of sampling the data (SGD with replacements, Single Shuffle,
and Random Reshuffling). The results are shown in
Figure~\ref{fig:Cifar}.
We see that both shuffling strategies are better than the plain SGD. The best performance is achieved by Single Shuffle (SS).

\section{Conclusion}
We present a framework for analyzing SGD algorithms under arbitrary data orderings.
For incremental gradient descent and single shuffle algorithms we improve the previously best known convergence rates. 
This improvement is in part 
because our framework allows to chose larger stepsizes than in previous analyses because we can consider shorter correlation periods.
Our study highlights the benefits of using SGD with single shuffling and provides new insights into its convergence properties for non-convex smooth optimization.

\section*{Acknowledgments}
AK would like to thank Zachary Charles for the useful discussions that lead to the idea of this project. We also thank Amirkeivan Mohtashami for discussions, as well as Vinitra Swamy, Mary-Anne Hartley, and Alexander Hägele for their suggestions on manuscript writing. We are grateful to Cristobal Guzman, Grigory Malinovsky, and Peter Richtárik
for their useful comments on the first version of our manuscript.
AK is supported by a Google PhD Fellowship.
ND is supported by the Swiss State Secretariat for Education, Research and Innovation (SERI)
under contract number 22.00133.

\section*{Impact Statement} 
Our work is focused on improving the theoretical understanding of existing algorithms, we do not feel specific societal consequences must be specifically highlighted here.

\nocite{langley00}

{%
\bibliography{references}

\begin{thebibliography}{59}
\providecommand{\natexlab}[1]{#1}
\providecommand{\url}[1]{\texttt{#1}}
\expandafter\ifx\csname urlstyle\endcsname\relax
  \providecommand{\doi}[1]{doi: #1}\else
  \providecommand{\doi}{doi: \begingroup \urlstyle{rm}\Url}\fi

\bibitem[Ahn et~al.(2020)Ahn, Yun, and Sra]{Ahn20:shuffle}
Ahn, K., Yun, C., and Sra, S.
\newblock Sgd with shuffling: optimal rates without component convexity and
  large epoch requirements, 06 2020.

\bibitem[Ajalloeian \& Stich(2020)Ajalloeian and
  Stich]{ajalloeian2020convergence}
Ajalloeian, A. and Stich, S.~U.
\newblock On the convergence of {SGD} with biased gradients.
\newblock \emph{arXiv preprint arXiv:2008.00051}, 2020.

\bibitem[Bengio(2012)]{Bengio2012PracticalRF}
Bengio, Y.
\newblock Practical recommendations for gradient-based training of deep
  architectures.
\newblock In \emph{Neural Networks}, 2012.

\bibitem[Bertsekas(2015)]{Bertsekas15:incrementalGD}
Bertsekas, D.
\newblock Incremental gradient, subgradient, and proximal methods for convex
  optimization: A survey.
\newblock \emph{Optimization}, 2010, 07 2015.

\bibitem[Bertsekas(2011)]{bertsekas2011incremental}
Bertsekas, D.~P.
\newblock Incremental proximal methods for large scale convex optimization.
\newblock \emph{Mathematical programming}, 129\penalty0 (2):\penalty0 163--195,
  2011.

\bibitem[Bottou(2009)]{Bottou2009CuriouslyFC}
Bottou, L.
\newblock Curiously fast convergence of some stochastic gradient descent
  algorithms.
\newblock 2009.

\bibitem[Bubeck(2015)]{Bubeck15:convexopt}
Bubeck, S.
\newblock \emph{Convex Optimization: Algorithms and Complexity}.
\newblock 01 2015.
\newblock ISBN 9781601988614.
\newblock \doi{10.1561/9781601988614}.

\bibitem[Cha et~al.(2023)Cha, Lee, and Yun]{cha2023tighter}
Cha, J., Lee, J., and Yun, C.
\newblock Tighter lower bounds for shuffling sgd: Random permutations and
  beyond.
\newblock \emph{arXiv preprint arXiv:2303.07160}, 2023.

\bibitem[Chang \& Lin(2011)Chang and Lin]{chang2011libsvm}
Chang, C.-C. and Lin, C.-J.
\newblock Libsvm: a library for support vector machines.
\newblock \emph{ACM transactions on intelligent systems and technology (TIST)},
  2\penalty0 (3):\penalty0 1--27, 2011.

\bibitem[Cho et~al.(2023)Cho, Sharma, Joshi, Xu, Kale, and
  Zhang]{cho2023convergence}
Cho, Y.~J., Sharma, P., Joshi, G., Xu, Z., Kale, S., and Zhang, T.
\newblock On the convergence of federated averaging with cyclic client
  participation.
\newblock \emph{arXiv preprint arXiv:2302.03109}, 2023.

\bibitem[Chowdhery et~al.(2022)Chowdhery, Narang, Devlin, Bosma, Mishra,
  Roberts, Barham, Chung, Sutton, Gehrmann, Schuh, Shi, Tsvyashchenko, Maynez,
  Rao, Barnes, Tay, Shazeer, Prabhakaran, Reif, Du, Hutchinson, Pope, Bradbury,
  Austin, Isard, Gur-Ari, Yin, Duke, Levskaya, Ghemawat, Dev, Michalewski,
  Garc{\'i}a, Misra, Robinson, Fedus, Zhou, Ippolito, Luan, Lim, Zoph,
  Spiridonov, Sepassi, Dohan, Agrawal, Omernick, Dai, Pillai, Pellat,
  Lewkowycz, Moreira, Child, Polozov, Lee, Zhou, Wang, Saeta, D{\'i}az, Firat,
  Catasta, Wei, Meier-Hellstern, Eck, Dean, Petrov, and
  Fiedel]{Chowdhery2022PaLM}
Chowdhery, A., Narang, S., Devlin, J., Bosma, M., Mishra, G., Roberts, A.,
  Barham, P., Chung, H.~W., Sutton, C., Gehrmann, S., Schuh, P., Shi, K.,
  Tsvyashchenko, S., Maynez, J., Rao, A., Barnes, P., Tay, Y., Shazeer, N.~M.,
  Prabhakaran, V., Reif, E., Du, N., Hutchinson, B.~C., Pope, R., Bradbury, J.,
  Austin, J., Isard, M., Gur-Ari, G., Yin, P., Duke, T., Levskaya, A.,
  Ghemawat, S., Dev, S., Michalewski, H., Garc{\'i}a, X., Misra, V., Robinson,
  K., Fedus, L., Zhou, D., Ippolito, D., Luan, D., Lim, H., Zoph, B.,
  Spiridonov, A., Sepassi, R., Dohan, D., Agrawal, S., Omernick, M., Dai,
  A.~M., Pillai, T.~S., Pellat, M., Lewkowycz, A., Moreira, E., Child, R.,
  Polozov, O., Lee, K., Zhou, Z., Wang, X., Saeta, B., D{\'i}az, M., Firat, O.,
  Catasta, M., Wei, J., Meier-Hellstern, K.~S., Eck, D., Dean, J., Petrov, S.,
  and Fiedel, N.
\newblock Palm: Scaling language modeling with pathways.
\newblock \emph{ArXiv}, abs/2204.02311, 2022.

\bibitem[Dekel et~al.(2012)Dekel, Gilad-Bachrach, Shamir, and
  Xiao]{Dekel12:minibatch}
Dekel, O., Gilad-Bachrach, R., Shamir, O., and Xiao, L.
\newblock Optimal distributed online prediction using mini-batches.
\newblock \emph{J. Mach. Learn. Res.}, 13\penalty0 (null):\penalty0 165–202,
  jan 2012.
\newblock ISSN 1532-4435.

\bibitem[Duchi et~al.(2011)Duchi, Agarwal, Johansson, and
  Jordan]{Duchi11:markovmirrordescent}
Duchi, J., Agarwal, A., Johansson, M., and Jordan, M.
\newblock Ergodic mirror descent.
\newblock \emph{SIAM Journal on Optimization}, 22, 05 2011.
\newblock \doi{10.1109/Allerton.2011.6120236}.

\bibitem[Eichner et~al.(2019)Eichner, Koren, McMahan, Srebro, and
  Talwar]{eichner2019semi}
Eichner, H., Koren, T., McMahan, B., Srebro, N., and Talwar, K.
\newblock Semi-cyclic stochastic gradient descent.
\newblock In \emph{International Conference on Machine Learning}, pp.\
  1764--1773. PMLR, 2019.

\bibitem[Even(2023)]{even2023markov}
Even, M.
\newblock Stochastic gradient descent under markovian sampling schemes, 2023.

\bibitem[Feder~Cooper et~al.(2023)Feder~Cooper, Guo, Pham, Yuan, Ruan, Lu, and
  De~Sa]{feder2023cd}
Feder~Cooper, A., Guo, W., Pham, K., Yuan, T., Ruan, C.~F., Lu, Y., and De~Sa,
  C.
\newblock Cd-grab: Coordinating distributed example orders for provably
  accelerated training.
\newblock \emph{arXiv e-prints}, pp.\  arXiv--2302, 2023.

\bibitem[Goyal et~al.(2017)Goyal, Dollár, Girshick, Noordhuis, Wesolowski,
  Kyrola, Tulloch, Jia, and He]{Goyal17:imagenet1h}
Goyal, P., Dollár, P., Girshick, R., Noordhuis, P., Wesolowski, L., Kyrola,
  A., Tulloch, A., Jia, Y., and He, K.
\newblock Accurate, large minibatch sgd: Training imagenet in 1 hour.
\newblock 06 2017.

\bibitem[Grippo(1994)]{grippo1994class}
Grippo, L.
\newblock A class of unconstrained minimization methods for neural network
  training.
\newblock \emph{Optimization Methods and Software}, 4\penalty0 (2):\penalty0
  135--150, 1994.

\bibitem[G\"{u}rb\"{u}zbalaban et~al.(2015)G\"{u}rb\"{u}zbalaban, Ozdaglar, and
  Parrilo]{Gurbuzbalaban15:shuffling}
G\"{u}rb\"{u}zbalaban, M., Ozdaglar, A., and Parrilo, P.
\newblock Why random reshuffling beats stochastic gradient descent.
\newblock \emph{Mathematical Programming}, 186, 10 2015.
\newblock \doi{10.1007/s10107-019-01440-w}.

\bibitem[G{\"u}rb{\"u}zbalaban et~al.(2019)G{\"u}rb{\"u}zbalaban, Ozdaglar, and
  Parrilo]{gurbuzbalaban2019convergence}
G{\"u}rb{\"u}zbalaban, M., Ozdaglar, A., and Parrilo, P.~A.
\newblock Convergence rate of incremental gradient and incremental newton
  methods.
\newblock \emph{SIAM Journal on Optimization}, 29\penalty0 (4):\penalty0
  2542--2565, 2019.

\bibitem[Haochen \& Sra(2019)Haochen and Sra]{Haochen19:random_shuffle_first}
Haochen, J. and Sra, S.
\newblock Random shuffling beats {SGD} after finite epochs.
\newblock In Chaudhuri, K. and Salakhutdinov, R. (eds.), \emph{Proceedings of
  the 36th International Conference on Machine Learning}, volume~97 of
  \emph{Proceedings of Machine Learning Research}, pp.\  2624--2633. PMLR,
  09--15 Jun 2019.
\newblock URL \url{https://proceedings.mlr.press/v97/haochen19a.html}.

\bibitem[He et~al.(2016)He, Zhang, Ren, and Sun]{he2016deep}
He, K., Zhang, X., Ren, S., and Sun, J.
\newblock Deep residual learning for image recognition.
\newblock In \emph{Proceedings of the IEEE conference on computer vision and
  pattern recognition}, pp.\  770--778, 2016.

\bibitem[Jain et~al.(2019)Jain, Nagaraj, and Netrapalli]{Jain2019shuffle}
Jain, P., Nagaraj, D.~M., and Netrapalli, P.
\newblock Sgd without replacement: Sharper rates for general smooth convex
  functions.
\newblock In \emph{International Conference on Machine Learning}, 2019.

\bibitem[Johansson et~al.(2009)Johansson, Rabi, and
  Johansson]{Johansson09:markov}
Johansson, B., Rabi, M., and Johansson, M.
\newblock A randomized incremental subgradient method for distributed
  optimization in networked systems.
\newblock \emph{SIAM Journal on Optimization}, 20:\penalty0 1157--1170, 01
  2009.
\newblock \doi{10.1137/08073038X}.

\bibitem[Kohonen(1974)]{kohonen1974cyclic}
Kohonen, T.
\newblock An adaptive associative memory principle.
\newblock \emph{IEEE Transactions on Computers}, C-23\penalty0 (4):\penalty0
  444--445, 1974.
\newblock \doi{10.1109/T-C.1974.223960}.

\bibitem[Koloskova et~al.(2023)Koloskova, McKenna, Charles, Rush, and
  McMahan]{koloskova23:correlated_noise}
Koloskova, A., McKenna, R., Charles, Z., Rush, K., and McMahan, B.
\newblock Convergence of gradient descent with linearly correlated noise and
  applications to differentially private learning, 2023.
\newblock URL \url{https://arxiv.org/abs/2302.01463}.

\bibitem[Krizhevsky et~al.(2009)Krizhevsky, Hinton,
  et~al.]{krizhevsky2009learning}
Krizhevsky, A., Hinton, G., et~al.
\newblock Learning multiple layers of features from tiny images.
\newblock 2009.

\bibitem[Lan(2012)]{Lan12:opt}
Lan, G.
\newblock An optimal method for stochastic composite optimization.
\newblock \emph{Mathematical Programming}, 133:\penalty0 1--33, 06 2012.
\newblock \doi{10.1007/s10107-010-0434-y}.

\bibitem[Lan(2020)]{lan2020first}
Lan, G.
\newblock \emph{First-order and stochastic optimization methods for machine
  learning}, volume~1.
\newblock Springer, 2020.

\bibitem[LeCun(1998)]{lecun1998mnist}
LeCun, Y.
\newblock The mnist database of handwritten digits.
\newblock \emph{http://yann. lecun. com/exdb/mnist/}, 1998.

\bibitem[Lu et~al.(2022{\natexlab{a}})Lu, Guo, and Sa]{lu2022grab}
Lu, Y., Guo, W., and Sa, C.~D.
\newblock Grab: Finding provably better data permutations than random
  reshuffling.
\newblock In Oh, A.~H., Agarwal, A., Belgrave, D., and Cho, K. (eds.),
  \emph{Advances in Neural Information Processing Systems}, 2022{\natexlab{a}}.
\newblock URL \url{https://openreview.net/forum?id=nDemfqKHTpK}.

\bibitem[Lu et~al.(2022{\natexlab{b}})Lu, Meng, and Sa]{lu2022a:shuffle}
Lu, Y., Meng, S.~Y., and Sa, C.~D.
\newblock A general analysis of example-selection for stochastic gradient
  descent.
\newblock In \emph{International Conference on Learning Representations},
  2022{\natexlab{b}}.
\newblock URL \url{https://openreview.net/forum?id=7gWSJrP3opB}.

\bibitem[Luo(1991)]{luo1991convergence}
Luo, Z.-Q.
\newblock On the convergence of the lms algorithm with adaptive learning rate
  for linear feedforward networks.
\newblock \emph{Neural Computation}, 3\penalty0 (2):\penalty0 226--245, 1991.

\bibitem[Malinovsky et~al.(2022)Malinovsky, Mishchenko, and
  Richt{\'a}rik]{malinovsky2022server}
Malinovsky, G., Mishchenko, K., and Richt{\'a}rik, P.
\newblock Server-side stepsizes and sampling without replacement provably help
  in federated optimization.
\newblock \emph{arXiv preprint arXiv:2201.11066}, 2022.

\bibitem[Mania et~al.(2017)Mania, Pan, Papailiopoulos, Recht, Ramchandran, and
  Jordan]{mania17:perturbed_analysis}
Mania, H., Pan, X., Papailiopoulos, D., Recht, B., Ramchandran, K., and Jordan,
  M.~I.
\newblock Perturbed iterate analysis for asynchronous stochastic optimization.
\newblock \emph{SIAM Journal on Optimization}, 27\penalty0 (4):\penalty0
  2202--2229, 2017.
\newblock \doi{10.1137/16M1057000}.
\newblock URL \url{https://doi.org/10.1137/16M1057000}.

\bibitem[Mishchenko et~al.(2020)Mishchenko, Khaled, and
  Richt{\'a}rik]{Mishchenko2020RandomRS}
Mishchenko, K., Khaled, A., and Richt{\'a}rik, P.
\newblock Random reshuffling: Simple analysis with vast improvements.
\newblock \emph{Advances in {Neural} {Information} {Processing} {Systems}},
  abs/2006.05988, 2020.

\bibitem[Mishchenko et~al.(2022)Mishchenko, Khaled, and
  Richt{\'a}rik]{mishchenko2022proximal}
Mishchenko, K., Khaled, A., and Richt{\'a}rik, P.
\newblock Proximal and federated random reshuffling.
\newblock In \emph{International Conference on Machine Learning}, pp.\
  15718--15749. PMLR, 2022.

\bibitem[Mohtashami et~al.(2022)Mohtashami, Stich, and
  Jaggi]{Mohtashami2022data_order}
Mohtashami, A., Stich, S.~U., and Jaggi, M.
\newblock Characterizing \& finding good data orderings for fast convergence of
  sequential gradient methods.
\newblock \emph{ArXiv}, abs/2202.01838, 2022.

\bibitem[Nedi{\'c} \& Bertsekas(2001)Nedi{\'c} and
  Bertsekas]{nedic2001convergence}
Nedi{\'c}, A. and Bertsekas, D.
\newblock Convergence rate of incremental subgradient algorithms.
\newblock \emph{Stochastic optimization: algorithms and applications}, pp.\
  223--264, 2001.

\bibitem[Nedic \& Bertsekas(2001)Nedic and Bertsekas]{nedic2001incremental}
Nedic, A. and Bertsekas, D.~P.
\newblock Incremental subgradient methods for nondifferentiable optimization.
\newblock \emph{SIAM Journal on Optimization}, 12\penalty0 (1):\penalty0
  109--138, 2001.

\bibitem[Nesterov(2018)]{nesterov2018lectures}
Nesterov, Y.
\newblock \emph{Lectures on convex optimization}, volume 137.
\newblock Springer, 2018.

\bibitem[Nguyen et~al.(2020)Nguyen, Tran-Dinh, Phan, Nguyen, and van
  Dijk]{Nguyen20:unified_shuffling}
Nguyen, L., Tran-Dinh, Q., Phan, D., Nguyen, P., and van Dijk, M.
\newblock A unified convergence analysis for shuffling-type gradient methods,
  02 2020.

\bibitem[Paszke et~al.(2019)Paszke, Gross, Massa, Lerer, Bradbury, Chanan,
  Killeen, Lin, Gimelshein, Antiga, et~al.]{paszke2019pytorch}
Paszke, A., Gross, S., Massa, F., Lerer, A., Bradbury, J., Chanan, G., Killeen,
  T., Lin, Z., Gimelshein, N., Antiga, L., et~al.
\newblock Pytorch: An imperative style, high-performance deep learning library.
\newblock \emph{Advances in neural information processing systems}, 32, 2019.

\bibitem[Rajput et~al.(2020)Rajput, Gupta, and
  Papailiopoulos]{Rajput20:shuffle_sgd_quadratics}
Rajput, S., Gupta, A., and Papailiopoulos, D.
\newblock Closing the convergence gap of {SGD} without replacement.
\newblock In III, H.~D. and Singh, A. (eds.), \emph{Proceedings of the 37th
  International Conference on Machine Learning}, volume 119 of
  \emph{Proceedings of Machine Learning Research}, pp.\  7964--7973. PMLR,
  13--18 Jul 2020.
\newblock URL \url{https://proceedings.mlr.press/v119/rajput20a.html}.

\bibitem[Recht \& R{\'e}(2012)Recht and R{\'e}]{recht2012toward}
Recht, B. and R{\'e}, C.
\newblock Toward a noncommutative arithmetic-geometric mean inequality:
  Conjectures, case-studies, and consequences.
\newblock In \emph{Conference on Learning Theory}, pp.\  11--1. JMLR Workshop
  and Conference Proceedings, 2012.

\bibitem[Recht \& Ré(2011)Recht and Ré]{Recht11:jellyfish}
Recht, B. and Ré, C.
\newblock Parallel stochastic gradient algorithms for large-scale matrix
  completion.
\newblock \emph{Mathematical Programming Computation}, 5, 04 2011.
\newblock \doi{10.1007/s12532-013-0053-8}.

\bibitem[Robbins \& Monro(1951)Robbins and Monro]{Robbins51:sgd}
Robbins, H. and Monro, S.
\newblock {A Stochastic Approximation Method}.
\newblock \emph{The Annals of Mathematical Statistics}, 22\penalty0
  (3):\penalty0 400 -- 407, 1951.
\newblock \doi{10.1214/aoms/1177729586}.
\newblock URL \url{https://doi.org/10.1214/aoms/1177729586}.

\bibitem[Rodomanov \& Kropotov(2016)Rodomanov and
  Kropotov]{rodomanov2016superlinearly}
Rodomanov, A. and Kropotov, D.
\newblock A superlinearly-convergent proximal {N}ewton-type method for the
  optimization of finite sums.
\newblock In \emph{International Conference on Machine Learning}, pp.\
  2597--2605, 2016.

\bibitem[Sadiev et~al.(2022)Sadiev, Malinovsky, Gorbunov, Sokolov, Khaled,
  Burlachenko, and Richt{\'a}rik]{sadiev2022federated}
Sadiev, A., Malinovsky, G., Gorbunov, E., Sokolov, I., Khaled, A., Burlachenko,
  K., and Richt{\'a}rik, P.
\newblock Federated optimization algorithms with random reshuffling and
  gradient compression.
\newblock \emph{arXiv preprint arXiv:2206.07021}, 2022.

\bibitem[Safran \& Shamir(2020)Safran and
  Shamir]{Safran20:random_shuffle_lower_bound}
Safran, I. and Shamir, O.
\newblock How good is sgd with random shuffling?
\newblock In Abernethy, J. and Agarwal, S. (eds.), \emph{Proceedings of Thirty
  Third Conference on Learning Theory}, volume 125 of \emph{Proceedings of
  Machine Learning Research}, pp.\  3250--3284. PMLR, 09--12 Jul 2020.
\newblock URL \url{https://proceedings.mlr.press/v125/safran20a.html}.

\bibitem[Safran \& Shamir(2021)Safran and Shamir]{Safran21:shuffle}
Safran, I. and Shamir, O.
\newblock Random shuffling beats {SGD} only after many epochs on
  ill-conditioned problems.
\newblock \emph{CoRR}, abs/2106.06880, 2021.
\newblock URL \url{https://arxiv.org/abs/2106.06880}.

\bibitem[Shalev-Shwartz et~al.(2011)Shalev-Shwartz, Singer, Srebro, and
  Cotter]{shalev2011pegasos}
Shalev-Shwartz, S., Singer, Y., Srebro, N., and Cotter, A.
\newblock {Pegasos: primal estimated sub-gradient solver for SVM}.
\newblock \emph{Mathematical Programming}, 127\penalty0 (1):\penalty0 3--30,
  2011.

\bibitem[Stich \& Karimireddy(2022)Stich and
  Karimireddy]{stich21:error-feedback}
Stich, S.~U. and Karimireddy, S.~P.
\newblock The error-feedback framework: Better rates for {SGD} with delayed
  gradients and compressed updates.
\newblock \emph{J. Mach. Learn. Res.}, 21\penalty0 (1), jun 2022.
\newblock ISSN 1532-4435.

\bibitem[Sun(2020)]{sun2020optimization}
Sun, R.-Y.
\newblock Optimization for deep learning: An overview.
\newblock \emph{Journal of the Operations Research Society of China},
  8\penalty0 (2):\penalty0 249--294, 2020.

\bibitem[Touvron et~al.(2023)Touvron, Lavril, Izacard, Martinet, Lachaux,
  Lacroix, Rozière, Goyal, Hambro, Azhar, Rodriguez, Joulin, Grave, and
  Lample]{touvron2023llama}
Touvron, H., Lavril, T., Izacard, G., Martinet, X., Lachaux, M.-A., Lacroix,
  T., Rozière, B., Goyal, N., Hambro, E., Azhar, F., Rodriguez, A., Joulin,
  A., Grave, E., and Lample, G.
\newblock Llama: Open and efficient foundation language models, 2023.

\bibitem[Widrow \& Hoff(1960)Widrow and Hoff]{widrow1960adaptive}
Widrow, B. and Hoff, M.
\newblock Adaptive switching circuitsinstitute of radio engineers, western
  electronic show and convention record, part 4.(pp. 96--104). new york, ny:
  Institute of radio engineers, 1960.

\bibitem[Yun et~al.(2021{\natexlab{a}})Yun, Rajput, and Sra]{yun2021minibatch}
Yun, C., Rajput, S., and Sra, S.
\newblock Minibatch vs local sgd with shuffling: Tight convergence bounds and
  beyond.
\newblock \emph{arXiv preprint arXiv:2110.10342}, 2021{\natexlab{a}}.

\bibitem[Yun et~al.(2021{\natexlab{b}})Yun, Sra, and Jadbabaie]{Yun2021CanSS}
Yun, C., Sra, S., and Jadbabaie, A.
\newblock Can single-shuffle sgd be better than reshuffling sgd and gd?
\newblock \emph{ArXiv}, abs/2103.07079, 2021{\natexlab{b}}.

\bibitem[Yun et~al.(2022)Yun, Rajput, and Sra]{yun2022minibatch}
Yun, C., Rajput, S., and Sra, S.
\newblock Minibatch vs local {SGD} with shuffling: Tight convergence bounds and
  beyond.
\newblock In \emph{International Conference on Learning Representations}, 2022.
\newblock URL \url{https://openreview.net/forum?id=LdlwbBP2mlq}.

\end{thebibliography}
}
\bibliographystyle{icml2024}

\newpage
\appendix
\onecolumn
\section{Additional Experiments}

\subsection{Variance Estimation}

First, we present an empirical study of different variance parameters
(see Section~\ref{sec:variance}),
that are used to bound the convergence rates of stochastic gradient descent methods.
For that, we take the \texttt{w1a} dataset
from LibSVM \citep{chang2011libsvm} and consider the set of gradients
$\{ \nabla f_i(\xx_0) \}_{i = 1}^n$
for the Logistic Regression model \eqref{AppLogReg},
evaluated at a fixed point $\xx_0 = \0$. We have an estimate for the standard variance
used in the classic SGD analysis:
$$
\ba{rcl}
\sigma_{\text{SGD}}^2 & := & \frac{1}{n} \sum\limits_{j = 1}^n \| \nabla f_j(\xx_0) - \nabla f(\xx_0) \|^2.
\ea
$$
Then, for a randomly (uniformly) selected data ordering $(i_1, \ldots, i_n)$, we compute the quantities:
$$
\ba{rcl}
\xi_\tau\Bigl[ (i_1, \ldots, i_n) \Bigr]
& := &
\Bigl\| \sum\limits_{t = 0}^{\tau} (\nabla f_{i_t}(\xx_0) - \nabla f(\xx_0) )  \Bigr\|^2,
\qquad 1 \leq \tau \leq n,
\ea
$$
and use them for estimating our variance parameter, as follows:
$$
\ba{rcl}
\sigma_{\tau}^2 & := & \max\limits_{j = 0, \ldots, \tau - 1} \bar{\xi}_j,
\ea
$$
where $\bar{\xi}_j$ is the empirical mean estimated with $100$ samples
of data orderings.
The result of our computations is shown in Figure~\ref{fig:Sigma}.
We see that the value of $\sigma_{\tau}^2$, that is employed in our analysis
of the Shuffle SGD methods is significantly better than its counterpart $n \sigma_{\text{SGD}}^2$
from the classic SGD with replacement.
Hence, these observations empirically confirm superiority
of the Shuffle strategies in practice.

\begin{figure}[h!]
    \centering
    \hspace*{-20pt}
    \includegraphics[width=0.55\linewidth]{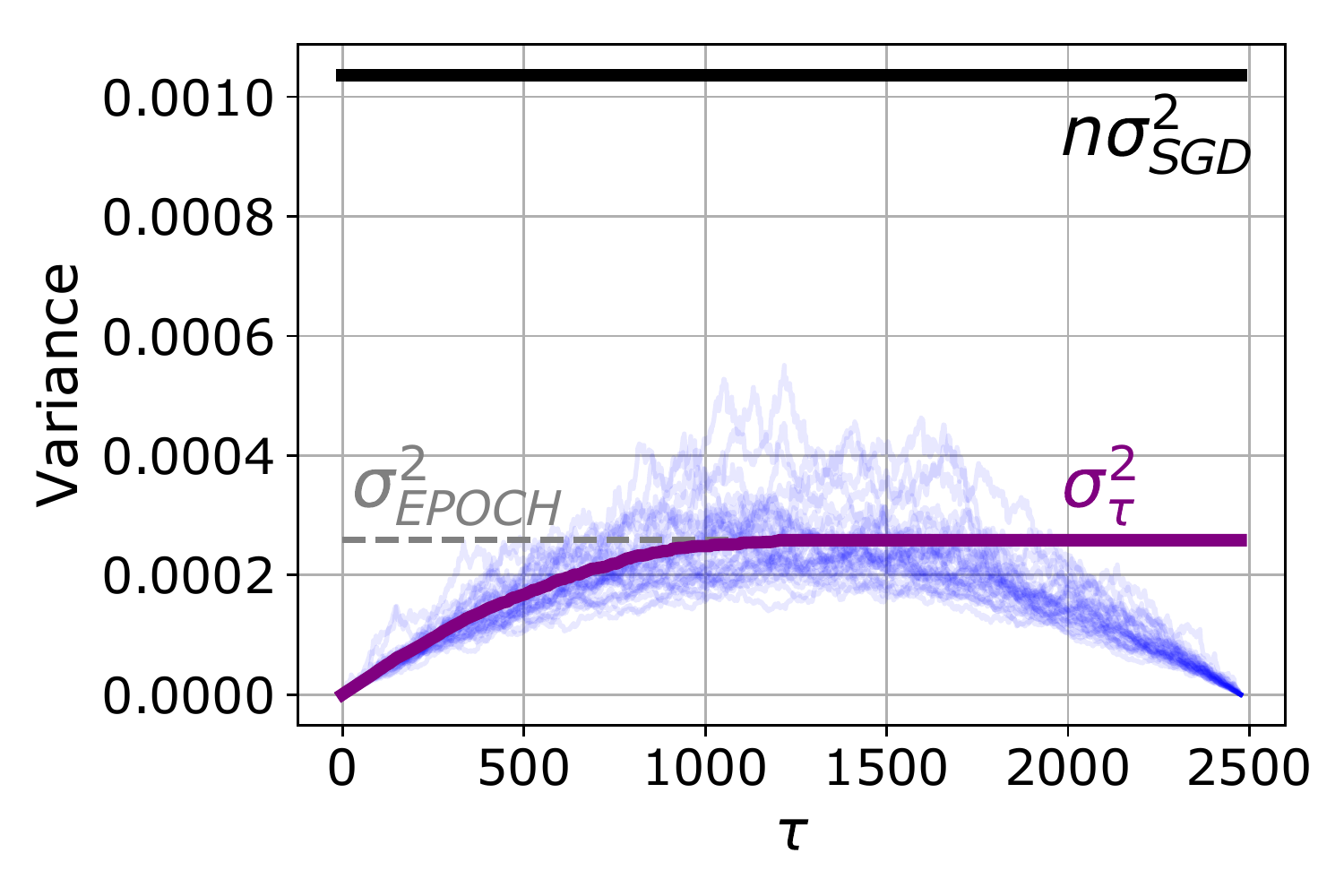}
    
    \caption{\small Variance estimation for the Logistic Regression
    model on \texttt{w1a} dataset. Our variance parameter $\sigma_{\tau}^2$ is significantly
    better than its corresponding upper bound $n \sigma_{\text{SGD}}^2$ used in classical SGD as well as in some prior works of analysing Shuffle SGD strategies \citep{Mishchenko2020RandomRS}. When $\tau$ is smaller than $\frac{n}{2}$ our variance parameter $\sigma_{\tau}^2$ is also better than $\sigmaepoch$ used to analyse Shuffle SGD in \citep{Mohtashami2022data_order, lu2022a:shuffle}.}
    \label{fig:Sigma}
\end{figure}

\subsection{Logistic Regression}

In this section we present experimental comparison of SGD with Single Shuffling and Random Reshuffling for training Logistic Regression model on several other datasets from LibSVM \citep{chang2011libsvm}.
Our objective has the following form,
\beq \label{AppLogReg}
\ba{rcl}
\min\limits_{\xx \in \R^d}
\Bigl[
f(\xx) & := & \frac{1}{n}\sum\limits_{i = 1}^n
\log(1 + e^{-y_i \la \aa_i, \xx \ra} )
\Bigr],
\ea
\eeq
where $\{ \aa_i\}_{i = 1}^n$ are the vectors of features
from $\R^d$ and $\{ y_i \}_{i = 1}^n$ are the training labels from $\{ \pm 1 \}$.
Thus, our objective is smooth and convex, but not strongly convex.
We apply all the methods starting from $\xx_0 = \0$
and using a constant stepsize $\gamma > 0$.
We vary several values for $\gamma$, that are better suitable for each
particular dataset. We show how the full gradient norm $\| \nabla f(\xx_t) \|$
changes with iterations $t \geq 0$. One Epoch is equal to $n$ iterations.
The datasets we use are:
\texttt{ionosphere} $(d = 34, n = 351)$,
\texttt{breast-cancer} $(d = 10, n = 683)$,
\texttt{a9a} $(d = 123, n = 32561)$, randomly shrinked \texttt{w1a} $(d = 300, n = 500)$, \texttt{rcv1} $(d = 47236, n = 20242)$.
The methods are implemented in Python 3.

In Figure~\ref{fig:LogReg_extra}, we see that the methods with
Random Reshuffling (RR) and Single Shuffling (SS)
constantly demonstrate better convergence behaviour for this problem,
as compared to the classical SGD that uses sampling with replacement.
For a fixed value $\gamma$, RR and SS strategies
provide the method
with a smaller variance of stochastic gradients,
which in turn results in a more stable and faster convergence.
This confirms our theory.

\subsection{Neural Networks}

In this section, we present our experimental results
on training Neural Network models 
for MNIST~\cite{lecun1998mnist} and CIFAR~\cite{krizhevsky2009learning} datasets.
For MNIST, we used a small simple architecture
consisting of one convolutional layer
(with $3$ output channels and the kernel of size $2$)
and two consequent fully connected layers
with $64$ and $10$ output neurons respectively.
Therefore, in total our model has $d = 140697$
parameters to train. We used smooth \texttt{tanh} activation
function between layers.
After each Epoch, we compute the full gradient norm
evaluated over the entire training data.
The results for a random selection of MNIST dataset of size
$n = 1000$ are shown in Figure~\ref{fig:NN}
and for $n = 500$ in Figure~\ref{fig:NN_extra}.

We see that all the methods posses similar convergence rates,
while the convergence behaviour of RR and SS strategies
is more stable than that of the SGD that uses sampling with replacement.
\begin{figure}[h!]
    \centering
    \includegraphics[width=1.0\linewidth]{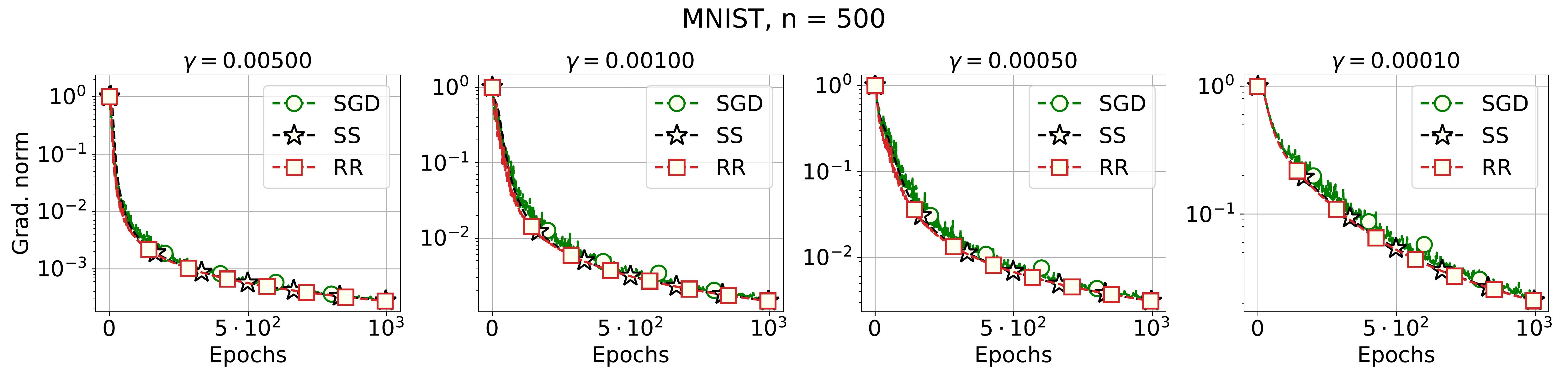}
    \caption{\small Training 
    the neural network model on MNIST dataset.
    Random Reshuffling (RR) works always the same or better than SGD.}
    \label{fig:NN_extra}
\end{figure}

\begin{figure}[h!]
    \centering
    \includegraphics[width=0.97\linewidth]{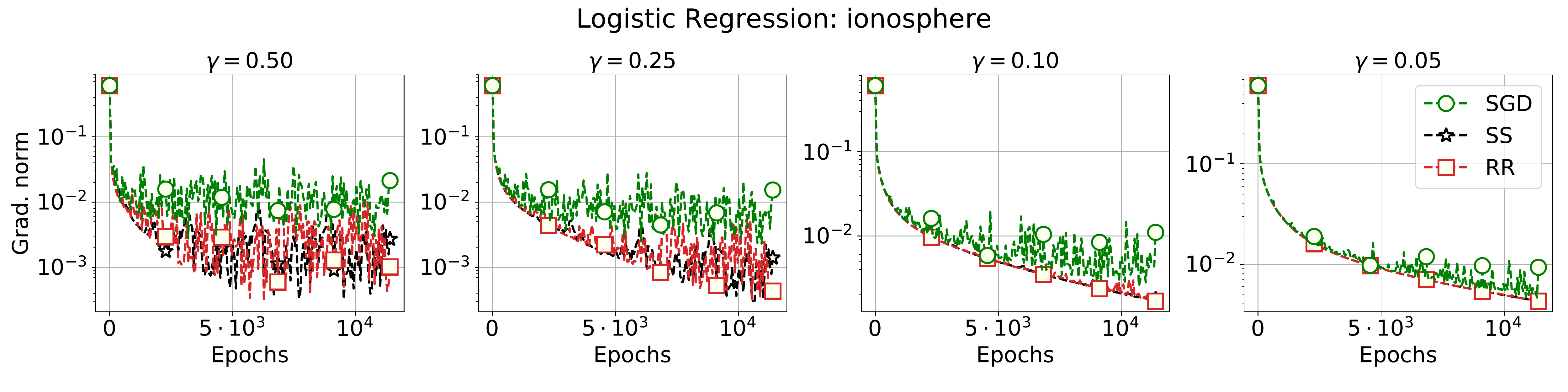}
    \includegraphics[width=0.97\linewidth]{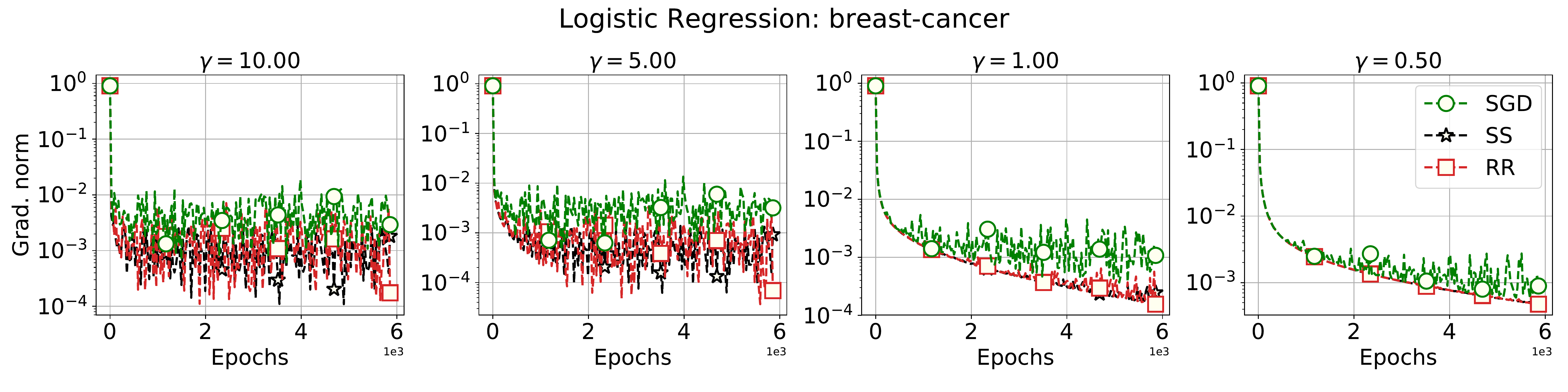}
        \includegraphics[width=0.97\linewidth]{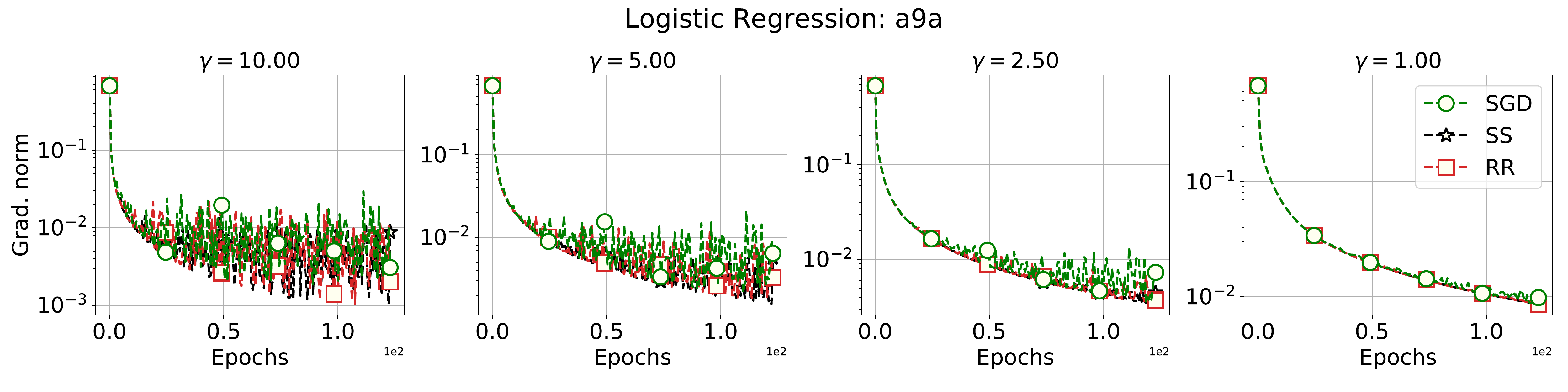}
        \includegraphics[width=0.97\linewidth]{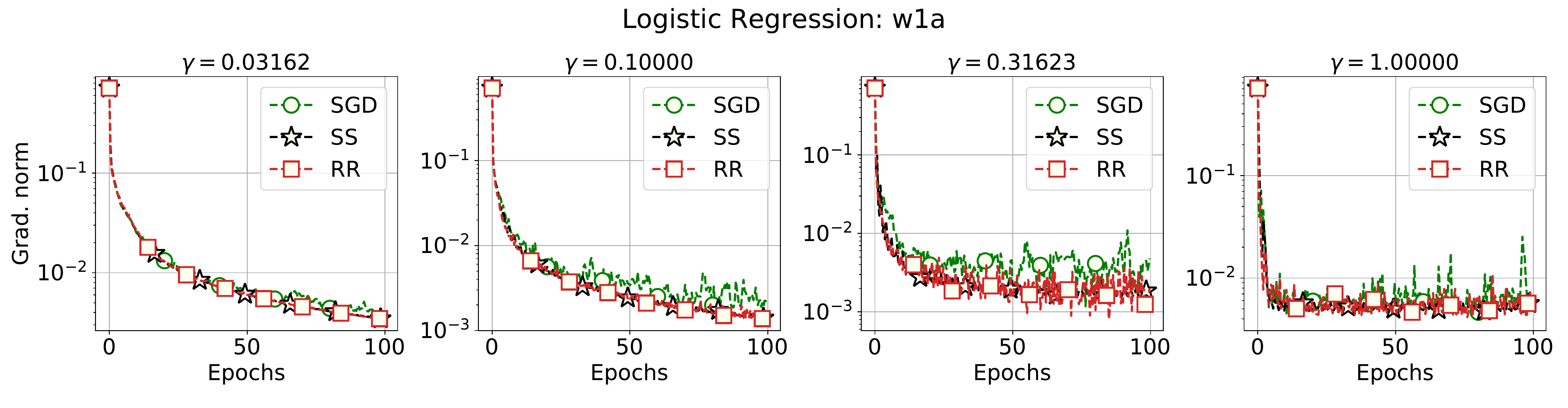}
        \includegraphics[width=0.97\linewidth]{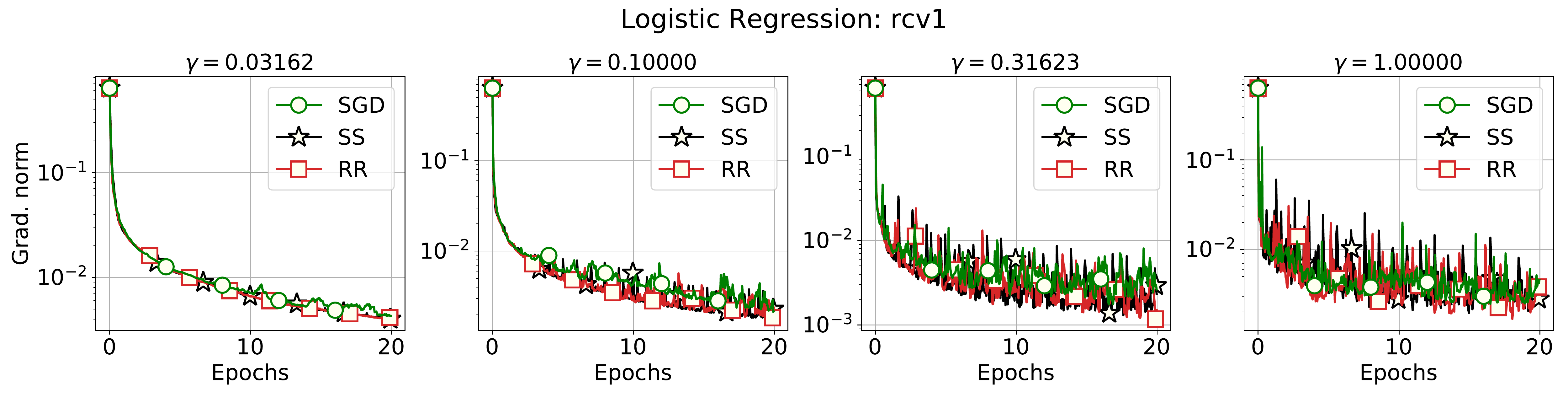}
    \caption{\small Convergence curves for logistic regression on the real data.
    Random Reshuffling (RR) and Single Shuffling (SS) work always better than SGD across varying learning rates.}
    \label{fig:LogReg_extra}
\end{figure}

\clearpage
\section{Proofs}
We first restate L-smoothness condition from \cref{as:smooth}.

\begin{align}\label{eq:l-smooth}
    \norm{\nabla f_i(\xx) - \nabla f_i(\yy)} \leq L \norm{\xx - \yy}\,, \forall \xx,\yy \in \R^d\,.   
\end{align}

\subsection{Useful inequalities}
\begin{lemma}
For any finite set of vectors $\{\aa_i\}_{i = 1}^{n} \subset \R^d$,
\begin{align}\label{eq:sum_of_n_vectors}
    \norm{\sum_{i = 1}^n \aa_i}^2 \leq n \sum_{i = 1}^n \norm{\aa_i}^2.
\end{align}
\end{lemma}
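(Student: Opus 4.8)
The plan is to reduce the claim to a single application of convexity of the squared Euclidean norm (equivalently, to Cauchy--Schwarz). First I would rewrite the left-hand side by pulling out a factor of $n$, namely $\norm{\sum_{i=1}^n \aa_i}^2 = n^2 \norm{\frac1n \sum_{i=1}^n \aa_i}^2$, so that the arithmetic mean $\frac1n\sum_{i=1}^n \aa_i$ appears inside the norm. Since the map $\xx \mapsto \norm{\xx}^2$ is convex, Jensen's inequality gives $\norm{\frac1n\sum_{i=1}^n \aa_i}^2 \le \frac1n\sum_{i=1}^n \norm{\aa_i}^2$. Multiplying both sides back by $n^2$ yields exactly $\norm{\sum_{i=1}^n \aa_i}^2 \le n \sum_{i=1}^n \norm{\aa_i}^2$, which is the assertion.

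As an alternative route, one can expand the norm directly: $\norm{\sum_{i=1}^n \aa_i}^2 = \sum_{i=1}^n\sum_{j=1}^n \lin{\aa_i,\aa_j}$, then bound each of the $n^2$ inner products using the elementary estimate $\lin{\aa_i,\aa_j} \le \frac12\bigl(\norm{\aa_i}^2 + \norm{\aa_j}^2\bigr)$ (Cauchy--Schwarz followed by AM--GM), and sum; by symmetry of the double sum this collapses to $n\sum_{i=1}^n \norm{\aa_i}^2$ again. I would present the shorter Jensen argument in the body, and perhaps remark that the constant $n$ is sharp, being attained when all the $\aa_i$ coincide.

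There is essentially no obstacle here: the statement is one of the standard ``useful inequalities.'' The only points to be mildly careful about are that the degenerate cases ($n=1$, or some $\aa_i=\0$) are subsumed automatically, and that one does not inadvertently prove a weaker bound with a larger constant. This lemma will subsequently be invoked to control sums of stochastic-gradient deviations of the form $\sum_{t} \bigl(\nabla f_{i_t}(\xx) - \nabla f(\xx)\bigr)$ that arise in Definition~\ref{def:main_variance} and in the telescoping arguments of the main proof.
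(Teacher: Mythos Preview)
Your proof is correct; both the Jensen route and the expansion-plus-AM--GM route are standard and valid. The paper itself does not supply a proof for this lemma at all---it is listed under ``Useful inequalities'' and simply stated, being treated as folklore---so there is no approach to compare against.
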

\begin{lemma}
For any two vectors $\aa, \bb \in \R^d$ and for all $\alpha > 0$,
\begin{align}\label{eq:scalar_product_ab}
    2 \langle \aa, \bb \rangle \leq \alpha \norm{\aa}^2 + \alpha^{-1}\norm{\bb}^2.
\end{align}
\end{lemma}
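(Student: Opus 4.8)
The plan is to prove \eqref{eq:scalar_product_ab} by the standard completing-the-square (Young's inequality) argument, relying only on the non-negativity of squared norms together with the bilinearity of the inner product and the identity $\norm{\vv}^2 = \langle \vv, \vv\rangle$. The single design choice is how to distribute the factor of $\alpha$ between $\aa$ and $\bb$ so that, upon expansion, the cross term reproduces \emph{exactly} $2\langle \aa,\bb\rangle$ while the two square terms acquire the coefficients $\alpha$ and $\alpha^{-1}$.

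Concretely, since $\alpha > 0$ the scalars $\sqrt{\alpha}$ and $\alpha^{-1/2}$ are well defined, so I would first introduce the auxiliary vector $\sqrt{\alpha}\,\aa - \tfrac{1}{\sqrt{\alpha}}\,\bb \in \R^d$ and invoke the trivial bound $\norm{\vv}^2 \geq 0$ applied to it:
\begin{align*}
    0 \;\leq\; \norm{\sqrt{\alpha}\,\aa - \tfrac{1}{\sqrt{\alpha}}\,\bb}^2 .
\end{align*}
Expanding the right-hand side via $\norm{\vv}^2 = \langle \vv, \vv\rangle$ and bilinearity gives
\begin{align*}
    0 \;\leq\; \alpha \norm{\aa}^2 \;-\; 2\langle \aa, \bb\rangle \;+\; \alpha^{-1}\norm{\bb}^2 ,
\end{align*}
where the middle term arises from $-2\langle \sqrt{\alpha}\,\aa, \tfrac{1}{\sqrt{\alpha}}\,\bb\rangle = -2\langle \aa,\bb\rangle$, the scalar factors $\sqrt{\alpha}$ and $1/\sqrt{\alpha}$ cancelling exactly. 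Rearranging this single inequality moves $2\langle \aa,\bb\rangle$ to the left and yields the claimed bound $2\langle \aa,\bb\rangle \leq \alpha\norm{\aa}^2 + \alpha^{-1}\norm{\bb}^2$.

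The main obstacle: there is essentially none—this is a one-line algebraic identity, and the only point requiring care is the scaling, i.e.\ pairing $\aa$ with $\sqrt{\alpha}$ and $\bb$ with $\alpha^{-1/2}$ so that the inner-product term comes out at precisely $2\langle\aa,\bb\rangle$ rather than with a stray factor of $\alpha$. An alternative route would be to combine the scalar weighted AM-GM inequality $2uv \leq \alpha u^2 + \alpha^{-1} v^2$, applied to $u = \norm{\aa}$ and $v = \norm{\bb}$, with the Cauchy-Schwarz bound $\langle \aa,\bb\rangle \leq \norm{\aa}\,\norm{\bb}$; however, the completing-the-square derivation above is self-contained, avoids invoking Cauchy-Schwarz, and makes transparent that equality holds exactly when $\sqrt{\alpha}\,\aa = \tfrac{1}{\sqrt{\alpha}}\,\bb$, i.e.\ when $\bb = \alpha\,\aa$.
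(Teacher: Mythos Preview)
Your proof is correct; it is the standard completing-the-square argument for Young's inequality. The paper itself states this lemma without proof, treating it as a well-known elementary inequality, so there is nothing to compare against.
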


\subsection{Main Lemma}
Our proof is based on a technique called \emph{perturbed iterate analysis} that analyzes a sequence of virtual iterates~\citep{mania17:perturbed_analysis,stich21:error-feedback}.
Recently, \cite{koloskova23:correlated_noise} proposed a modified virtual sequence with restart iterations and conducted an analysis for gradients perturbed with constant noise patterns (independent of $\xx_t)$.
Here, we extend their analysis for arbitrary noise perturbations (that can depend on the iterates $\xx_t$).

For our analysis we use the restart virtual sequence $\{ \tilde{\xx}_t \}_{t \geq 0}$,
starting from $\tilde{\xx}_0 = \xx_0$ and defined as follows:
\begin{align}\label{eq:restart}
    \begin{aligned}
        \tilde{\xx}_{t + 1} &= \tilde \xx_t - \gamma \nabla f(\xx_t) \\\tilde \xx_{t + 1} &= \xx_{t + 1}
    \end{aligned}
    &&
    \begin{aligned}
        &\text{if }~(t + 1) \mymod \tau \neq 0 \\ &\text{if }~(t + 1) \mymod \tau = 0
    \end{aligned}
\end{align}
where $\tau = \Theta\bigl( \frac{1}{L \gamma}\bigr)$ is our key parameter.
Note that in \eqref{eq:restart} we use the \textit{full gradients} $\nabla f(\cdot)$
evaluated at the iterates of our Algorithm~\ref{eq:algo}.
We denote by $r(t)$ the closest restart iteration to $t$, i.e. 
$$
\ba{rcl}
r(t) & = & \lfloor \frac{t}{\tau}\rfloor \tau \;\; = \;\; t - t \mymod \tau.
\ea
$$

For simplifying the presentation, we also denote, for $0 \leq t \leq T$:
\begin{align}\label{eq:phi}
    \phi_t(\xx) \;\, = \;\, \norm{\sum_{j = r(t)}^{t} \left(\nabla f(\xx) - \nabla f_{i_j} (\xx)\right) }^2
\end{align}
and
\begin{align}\label{eq:phi_bar}
    \bar{\phi}_t(\xx) \;\, = \;\,
    \begin{cases}
        0, \quad & \text{if} \;\, t = r(t), \\[5pt]
        \displaystyle
        \norm{\sum\limits_{j = r(t)}^{t - 1} \left(\nabla f(\xx) - \nabla f_{i_j} (\xx)\right) }^2, \quad & \text{otherwise}.
    \end{cases}
\end{align}

Since, for $r(t) \leq i \leq r$, we have $r(i) = r(t)$, we conclude that
$$
\ba{rcl}
\phi_i(\xx) & = & 
\displaystyle
\norm{\sum\limits_{j = r(t)}^{i} \left(\nabla f(\xx) - \nabla f_{i_j} (\xx)\right) }^2,
\qquad r(t) \leq i \leq t,
\ea
$$
and, in particular, $\bar{\phi}_t(\xx) = \phi_{t - 1}(\xx)$ wherever $t \not= r(t)$.

Note that due to \cref{def:main_variance}, we have, 
for all $0 \leq t \leq T$:
\beq \label{ExpPhiBound}
\ba{rcl}
\E \phi_t(\xx) & \leq & \sigmakt, \qquad \text{with} \qquad
k = \frac{r(t)}{\tau} = \lfloor \frac{t}{\tau}\rfloor.
\ea
\eeq

First, we prove a lemma to bound the distance between the virtual sequence $\tilde \xx_{t + 1}$ and the real sequence $\xx_{t + 1}$
from Algorithm~\eqref{eq:algo}.
Since the virtual and real iterate sequences have the following updates:
\begin{align} \label{VirtualRealSeq}
    \xx_{t + 1} &= \xx_{r(t)} - \gamma \sum_{j = r(t)}^{t} \nabla f_{i_j}(\xx_j), 
    && \tilde \xx_{t + 1} = \xx_{r(t)} - \gamma \sum_{j = r(t)}^{t} \nabla f(\xx_j),
\end{align}
it holds:
$$
\ba{rcl}
\norm{\tilde \xx_{t + 1} - \xx_{t + 1}}^2 & = & 
\displaystyle
\gamma^2\norm{\sum\limits_{j = r(t)}^{t}\left( \nabla f(\xx_j) - \nabla f_{i_j} (\xx_j) \right) }^2.
\ea
$$
Hence, it is enough just to bound the expression in the right hand side.

\begin{lemma}\label{lem:consensus} Under the same assumptions as in Theorem~\ref{thm:main}, we have, for any $0\leq t \leq T$:
\beq \label{MainLemmaEq}
\ba{rcl}
    \displaystyle
    \norm{ \sum\limits_{j = r(t)}^{t}\left( \nabla f(\xx_j) - \nabla f_{i_j} (\xx_j) \right) }^2
    & \!\!\! \leq \!\!\! & 
    \displaystyle
    3 \phi_t(\xx_{r(t)}) +  48 \gamma^2 L^2 \tau \sum\limits_{j = r(t)}^{t}\phi_j(\xx_{r(t)}) \\[10pt]
    & & \qquad
    \displaystyle
    + 16 \gamma^2 \tau^3 L^2  \sum\limits_{j = r(t)}^{t}\norm{\nabla f(\xx_j)}^2.
\ea
\eeq
Therefore, substituting that $\gamma \leq \frac{1}{8 \sqrt{3} L \tau}$, we have
\beq \label{XtXbartBound}
\ba{rcl}
    \displaystyle
    \norm{ \sum\limits_{j = r(t)}^{t}\left( \nabla f(\xx_j) - \nabla f_{i_j} (\xx_j) \right) }^2
    & \leq &
    \displaystyle
    3 \phi_t(\xx_{r(t)}) +  \frac{1}{4 \tau}  
    \sum\limits_{j = r(t)}^{t}\phi_j(\xx_{r(t)}) \\[10pt]
    & & \qquad 
    \displaystyle
+ \frac{\tau }{12}  \sum\limits_{j = r(t)}^{t}\norm{\nabla f(\xx_j)}^2.
\ea
\eeq
\end{lemma}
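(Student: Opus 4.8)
The plan is to unroll the difference between the real and virtual iterates over one correlated period and then bootstrap. Write $\xx_j - \xx_{r(t)} = -\gamma \sum_{\ell = r(t)}^{j-1} \nabla f_{i_\ell}(\xx_\ell)$ for $r(t) \le j \le t$. Using $L$-smoothness of each $f_i$, the telescoping term $\nabla f(\xx_j) - \nabla f_{i_j}(\xx_j)$ can be compared to its value at the anchor point $\xx_{r(t)}$:
\begin{align*}
\Bigl(\nabla f(\xx_j) - \nabla f_{i_j}(\xx_j)\Bigr) = \Bigl(\nabla f(\xx_{r(t)}) - \nabla f_{i_j}(\xx_{r(t)})\Bigr) + \Bigl(\nabla f(\xx_j) - \nabla f(\xx_{r(t)})\Bigr) - \Bigl(\nabla f_{i_j}(\xx_j) - \nabla f_{i_j}(\xx_{r(t)})\Bigr).
\end{align*}
Summing over $j$, the first group of terms sums to exactly $\sum_{j=r(t)}^t (\nabla f(\xx_{r(t)}) - \nabla f_{i_j}(\xx_{r(t)}))$, whose squared norm is $\phi_t(\xx_{r(t)})$ by definition~\eqref{eq:phi}. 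The remaining two groups are each bounded, via $L$-smoothness, by $L \norm{\xx_j - \xx_{r(t)}}$.

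First I would apply $\norm{\aa+\bb+\cc}^2 \le 3\norm{\aa}^2 + 3\norm{\bb}^2 + 3\norm{\cc}^2$ to split off the $3\phi_t(\xx_{r(t)})$ term, then use \eqref{eq:sum_of_n_vectors} (with at most $\tau$ summands in each sum) to get
\begin{align*}
\norm{\sum_{j=r(t)}^t \bigl(\nabla f(\xx_j) - \nabla f_{i_j}(\xx_j)\bigr)}^2 \le 3\phi_t(\xx_{r(t)}) + 6 L^2 \tau \sum_{j=r(t)}^t \norm{\xx_j - \xx_{r(t)}}^2 + (\text{same again}),
\end{align*}
i.e.\ a $\cO(L^2\tau)\sum_j \norm{\xx_j - \xx_{r(t)}}^2$ term. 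Next I would bound $\norm{\xx_j - \xx_{r(t)}}^2 = \gamma^2 \norm{\sum_{\ell=r(t)}^{j-1} \nabla f_{i_\ell}(\xx_\ell)}^2$ by adding and subtracting $\nabla f(\xx_\ell)$ inside the sum: $\nabla f_{i_\ell}(\xx_\ell) = \nabla f(\xx_\ell) + (\nabla f_{i_\ell}(\xx_\ell) - \nabla f(\xx_\ell))$. The second piece, after applying \eqref{eq:sum_of_n_vectors} across the $\le \tau$ summands, produces $\gamma^2 \tau \sum_\ell \norm{\nabla f(\xx_\ell) - \nabla f_{i_\ell}(\xx_\ell)}^2$ — but this is again a sum over a sub-window, so I would want to re-express it in terms of $\phi$; this is where the shift from $\xx_\ell$ back to $\xx_{r(t)}$ must be handled carefully (a second application of smoothness giving another $\norm{\xx_\ell - \xx_{r(t)}}^2$ contribution). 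The $\nabla f(\xx_\ell)$ piece contributes the $\gamma^2\tau^2 \sum_\ell \norm{\nabla f(\xx_\ell)}^2$ term, which after multiplication by the outer $L^2\tau$ becomes the $16\gamma^2\tau^3 L^2 \sum \norm{\nabla f(\xx_j)}^2$ term in \eqref{MainLemmaEq}.

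The main obstacle — and the place that needs genuine care rather than bookkeeping — is the circularity: $\norm{\xx_j - \xx_{r(t)}}^2$ is controlled by sums of $\phi$ over a sub-window plus sums of $\norm{\xx_\ell - \xx_{r(t)}}^2$ over a strictly shorter prefix. I would resolve this by an induction on $t$ within the period (or equivalently defining $S_t := \max_{r(t)\le j \le t} \norm{\xx_j - \xx_{r(t)}}^2$ and showing $S_t \le \cO(\gamma^2)\bigl(\sum \phi_j + \tau^2 \sum \norm{\nabla f(\xx_j)}^2\bigr)$ using that each recursive step loses a factor $\gamma^2 L^2 \tau^2 \le \tfrac{1}{192} < 1$ from the stepsize constraint $\gamma \le \tfrac{1}{8\sqrt3 L\tau}$, so the geometric series converges). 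Collecting constants gives the coefficients $48$ and $16$ in \eqref{MainLemmaEq}. Finally, substituting $\gamma^2 L^2 \le \tfrac{1}{192\tau^2}$ directly into \eqref{MainLemmaEq} turns $48\gamma^2 L^2 \tau$ into $\tfrac{48}{192\tau} = \tfrac{1}{4\tau}$ and $16\gamma^2\tau^3 L^2$ into $\tfrac{16\tau}{192} = \tfrac{\tau}{12}$, yielding \eqref{XtXbartBound}.
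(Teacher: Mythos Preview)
Your overall strategy matches the paper's: anchor each difference at $\xx_{r(t)}$, bound the drift $\norm{\xx_j-\xx_{r(t)}}^2$ recursively, and close the loop via the stepsize constraint $\gamma^2 L^2\tau^2 \le \tfrac{1}{192}$. The final substitution to get \eqref{XtXbartBound} from \eqref{MainLemmaEq} is exactly right.

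There is, however, one step where your order of operations would not give the stated constants. When you bound $\norm{\xx_j - \xx_{r(t)}}^2 = \gamma^2\bigl\|\sum_{\ell} \nabla f_{i_\ell}(\xx_\ell)\bigr\|^2$, you propose to apply \eqref{eq:sum_of_n_vectors} to the noise piece \emph{first}, obtaining $\gamma^2\tau\sum_\ell \norm{\nabla f(\xx_\ell)-\nabla f_{i_\ell}(\xx_\ell)}^2$, and only then shift to $\xx_{r(t)}$. But once you have broken the sum apart, you no longer have a squared norm of a partial sum, and the anchored individual terms $\norm{\nabla f(\xx_{r(t)})-\nabla f_{i_\ell}(\xx_{r(t)})}^2$ are \emph{not} $\phi_\ell(\xx_{r(t)})$; recovering $\phi$ from them costs an extra factor of $\tau$ (roughly, because $\sum_\ell$ over $\le\tau$ individual terms, each bounded via two consecutive $\phi$'s, yields $\cO(\tau)\sum_\ell\phi_\ell$ in place of a single $\phi_j$). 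That extra $\tau$ propagates and you would end up with $\cO(\gamma^2 L^2 \tau^3)\sum_j\phi_j$ rather than the $48\gamma^2 L^2\tau\sum_j\phi_j$ in \eqref{MainLemmaEq}. The paper avoids this by doing the three-way anchor split on the \emph{intact} sum $\bigl\|\sum_{\ell=r(t)}^{j}(\nabla f_{i_\ell}(\xx_\ell)-\nabla f(\xx_\ell))\bigr\|^2$, so that one of the three pieces is \emph{literally} $\phi_j(\xx_{r(t)})$ by definition, and only the two smoothness-correction pieces get hit with \eqref{eq:sum_of_n_vectors}.

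A smaller point: your proposed closure via induction on $t$ (or via $S_t=\max_j\norm{\xx_j-\xx_{r(t)}}^2$) works, but the paper's mechanism is simpler. It extends each inner sum $\sum_{\ell=r(t)}^{j}$ to $\sum_{\ell=r(t)}^{t}$ (valid since $j\le t$ and all summands are nonnegative), then sums the resulting inequality over $j=r(t),\dots,t$. This puts $\sum_j\norm{\xx_j-\xx_{r(t)}}^2$ on both sides with coefficient $12\gamma^2\tau^2 L^2\le\tfrac{1}{16}$ on the right, and a single rearrangement finishes without any induction.
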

\begin{proof}
Indeed, we have
$$
\ba{rcl}
&&\!\!\!\!\!\!\!\!\!\!\!\!\!\!\!\!\!\!\!\!\!\!\!\!\!\!\!\!\!\!\!\!\!\!\!\!\!
\displaystyle
\norm{\sum\limits_{j = r(t)}^{t}\left( \nabla f(\xx_j) - \nabla f_{i_j} (\xx_j) \right) }^2 \\
\\
&\stackrel{\eqref{eq:sum_of_n_vectors}}{\leq}&
\displaystyle
3 \norm{\sum\limits_{j = r(t)}^{t} \left(\nabla f(\xx_{r(t)}) - \nabla f_{i_j} (\xx_{r(t)}) \right)}^2 
+ 3 \norm{\sum\limits_{j = r(t)}^{t} \left(\nabla f(\xx_{r(t)}) - \nabla f(\xx_j) \right) }^2 \\
\\
& & 
\displaystyle
+ \;\; 3 \norm{\sum\limits_{j = r(t)}^{t} \left( \nabla f_{i_j} (\xx_j) - \nabla f_{i_j}(\xx_{r(t)}) \right)}^2  \\
\\
&\stackrel{\eqref{eq:phi}, \eqref{eq:sum_of_n_vectors}, \eqref{eq:l-smooth}}{\leq}& 
\displaystyle
3 \phi_t(\xx_{r(t)}) + 6 \tau L^2  \sum\limits_{j = r(t)}^{t} \norm{\xx_{r(t)} - \xx_j}^2.
\ea
$$
We further need to estimate the term
\begin{align*}
\sum_{j = r(t)}^{t} \norm{\xx_{r(t)} - \xx_j}^2.
\end{align*}
Looking individually at each element of the sum and using the update rule \eqref{eq:algo}, we obtain
\begin{align*}
& \norm{\xx_j - \xx_{r(t)} }^2 \\[5pt]
&\;\;\;\;\;\,=\;\;\;\;\;\, 
\gamma^2 \norm{\sum_{l = r(t)}^j \nabla f_{i_l}(\xx_l) }^2 \stackrel{\eqref{eq:sum_of_n_vectors}}{\leq}  2 \gamma^2 \norm{\sum_{l = r(t)}^j \left( \nabla f_{i_l}(\xx_l) - \nabla f(\xx_l) \right)}^2  
+ 2 \gamma^2 \norm{\sum_{l = r(t)}^j \nabla f(\xx_l)}^2 \\[5pt]
&\;\;\;\;\;\stackrel{\eqref{eq:sum_of_n_vectors}}{\leq}\;\;\;\; 
6 \gamma^2 \norm{\sum_{l = r(t)}^j \left(\nabla f_{i_l}(\xx_{r(t)}) - \nabla f(\xx_{r(t)}) \right)}^2 
+ 6 \gamma^2\norm{\sum_{l = r(t)}^j \left( \nabla f_{i_l}(\xx_{r(t)}) - \nabla f_{i_l}(\xx_{l}) \right)}^2 \\[5pt]
&\qquad\qquad+ \;\; 6 \gamma^2 \norm{\sum_{l = r(t)}^j \left( \nabla f(\xx_{r(t)}) - \nabla f(\xx_{l}) \right) }^2  
+ 2 \gamma^2 \norm{\sum_{l = r(t)}^j \nabla f(\xx_l)}^2 \\[5pt]
&\stackrel{\eqref{eq:phi}, \eqref{eq:sum_of_n_vectors}, \eqref{eq:l-smooth}}{\leq} 6 \gamma^2 \phi_j(\xx_{r(t)}) + 12 \gamma^2 \tau L^2 \sum_{l = r(t)}^j \norm{\xx_{r(t)} - \xx_l}^2 + 2 \gamma^2 \tau \sum_{l = r(t)}^j \norm{\nabla f(\xx_l)}^2\\
&\;\;\;\;\;\stackrel{j \leq t}{\leq}\;\;\;\; 
6 \gamma^2 \phi_j(\xx_{r(t)}) + 12 \gamma^2 \tau L^2 \sum_{l = r(t)}^{t} \norm{\xx_{r(t)} - \xx_l}^2 + 2 \gamma^2 \tau \sum_{l = r(t)}^{t} \norm{\nabla f(\xx_l)}^2.
\end{align*}
Thus,
\begin{align*}
& \sum_{j = r(t)}^{t} \norm{\xx_{r(t)} - \xx_j}^2 \\
& \quad \leq \quad 6 \gamma^2 \sum_{j = r(t)}^{t}\phi_j(\xx_{r(t)}) + 12 \gamma^2 \tau^2 L^2 \sum_{j = r(t)}^{t} \norm{\xx_{r(t)} - \xx_j}^2 
+ 2 \gamma^2 \tau^2 \sum_{j = r(t)}^{t}\norm{\nabla f(\xx_j)}^2.
\end{align*}
Using that $\tau \leq \frac{1}{8 \sqrt{3} L \gamma}$, we get that $12 \gamma^2 \tau^2 L^2 \leq \frac{1}{16}$. 
Thus, the coefficient in front of the second term
in the right hand side is smaller than $\frac{1}{16}$, and rearranging this term we obtain
\begin{align*}
\sum_{j = r(t)}^{t} \norm{\xx_{r(t)} - \xx_j}^2 \leq 
\frac{32 \gamma^2}{5}  \sum_{j = r(t)}^{t}\phi_j(\xx_{r(t)}) 
+ \frac{32 \gamma^2 \tau^2}{15}  \sum_{j = r(t)}^{t -1}\norm{\nabla f(\xx_j)}^2.
\end{align*}
We therefore conclude 
(using the trivial upper bounds $\frac{6 \cdot 32}{5} \leq 48$ and $\frac{6 \cdot 32}{15} \leq 16$) that
$$
\ba{cl}
& \displaystyle \norm{\sum\limits_{j = r(t)}^{t}\left( \nabla f(\xx_j) - \nabla f_{i_j} (\xx_j) \right) }^2 \\
\\
& \displaystyle \leq \;\; 3 \phi_t(\xx_{r(t)}) +  48 \gamma^2 L^2 \tau \sum\limits_{j = r(t)}^{t}\phi_j(\xx_{r(t)}) 
+ 16 \gamma^2 \tau^3 L^2  \sum\limits_{j = r(t)}^{t}\norm{\nabla f(\xx_j)}^2. \qedhere
\ea
$$
\end{proof}

\subsection{Proof of Theorem~\ref{thm:main}}
We analyze separately the iterations $t$ for which the restarts do not happen: $(t + 1) \mymod \tau \neq 0$, and the restart iterations: $(t + 1) \mymod \tau = 0$. 

\paragraph{Iterations without restarts.}
Using $L$-smoothness of $f$, that is implied by Assumption~\ref{as:smooth},
\begin{eqnarray*}
f(\tilde{\xx}_{t  +1}) &\leq& f(\tilde \xx_t) - \gamma \langle \nabla f(\tilde \xx_t), \nabla f(\xx_t) \rangle 
+ \frac{L \gamma^2}{2} \norm{\nabla f(\xx_t)}^2\\
&=& 
f(\tilde \xx_t) - \frac{\gamma}{2} \norm{\nabla f(\tilde \xx_t)}^2 -  \frac{\gamma}{2} \norm{\nabla f(\xx_t)}^2 
+ \frac{\gamma}{2}\norm{\nabla f(\tilde \xx_t) - \nabla f(\xx_t) }^2 \\
&& + \; \frac{L \gamma^2}{2} \norm{\nabla f(\xx_t)}^2 \\
&\leq& f(\tilde \xx_t) - \frac{\gamma}{2} \norm{\nabla f(\tilde \xx_t)}^2 -  \frac{\gamma}{2} \norm{\nabla f(\xx_t)}^2 
+ \frac{\gamma L^2}{2}\norm{\tilde \xx_t - \xx_t}^2 + \frac{L \gamma^2}{2} \norm{\nabla f(\xx_t)}^2.
\end{eqnarray*}

Employing bound \eqref{XtXbartBound} from Lemma~\ref{lem:consensus} 
for the distance between the real and virtual sequences\footnote{For convenience, we set everywhere that $\sum\limits_{j = a}^{b} \ldots \equiv 0$, wherever $a > b$.}:
\beq \label{BoundForIters}
\ba{rcl}
\displaystyle
\norm{\tilde{\xx}_t - \xx_t}^2 & = & 
\displaystyle
\gamma^2\norm{\sum\limits_{j = r(t)}^{t - 1}\left( \nabla f(\xx_j) - \nabla f_{i_j} (\xx_j) \right) }^2,
\ea
\eeq
we get (note that we have a shifted index $t \mapsto t - 1$ in \eqref{XtXbartBound},
and to cover formally the trivial case $t = r(t)$ we use $\bar{\phi}_t(\xx_t)$ in the bound instead of $\phi_{t - 1}(\xx_t)$):
\begin{align}
f(\tilde{\xx}_{t+1}) &\leq f(\tilde \xx_t) - \frac{\gamma}{2} \norm{\nabla f(\tilde \xx_t)}^2 -  \frac{\gamma}{2} \norm{\nabla f(\xx_t)}^2 
+ \frac{3L^2 \gamma^3}{2}  \bar{\phi}_{t}(\xx_{r(t)}) +  \frac{L^2 \gamma^3}{8 \tau}  \sum_{j = r(t)}^{t - 1}\phi_j(\xx_{r(t)})\nonumber\\
&\qquad\qquad + \frac{\tau L^2 \gamma^3}{24} \sum_{j = r(t)}^{t-1}\norm{\nabla f(\xx_j)}^2 + \frac{L \gamma^2}{2}\norm{\nabla f(\xx_t)}^2. \label{eq:descent_no_restart}
\end{align}

\paragraph{Restart Iterations.} Next, we analyse the iterations $t$ for which restarts happens in virtual sequence, i.e.\ $(t + 1) \mymod \tau = 0$. First, we re-write the update as 
\begin{align*}
\tilde \xx_{t + 1} & \stackrel{\eqref{eq:restart}}{=} \xx_{t + 1} \stackrel{\eqref{eq:algo}}{=} \xx_{t } - \gamma \nabla f_{i_t}(\xx_t)  =  \tilde \xx_t +  (\xx_{t } - \tilde \xx_t) - \gamma \nabla f_{i_t}(\xx_t) \\
&= \tilde \xx_t - \gamma \nabla f(\xx_t)  + \gamma \sum_{j = r(t)}^{t}\left( \nabla f(\xx_j) - \nabla f_{i_j}(\xx_j)\right), 
\end{align*}
where we used that $\xx_t - \tilde \xx_t = \gamma \sum\limits_{j = r(t)}^{t - 1}\left( \nabla f(\xx_j) - \nabla f_{i_j}(\xx_j)\right) $ due to the definitions (see \eqref{VirtualRealSeq}). 

We use $L$-smoothness of $f$ that follows from Assumption~\ref{as:smooth},
thus
\begin{align*}
 f(\tilde \xx_{t + 1}) &\leq f(\tilde \xx_t) - \gamma \langle \nabla f(\tilde \xx_t), \nabla f(\xx_t) - \sum_{j = r(t)}^{t}\left( \nabla f(\xx_j) - \nabla f_{i_j}(\xx_j)\right)\rangle \\
 &\qquad \qquad + \frac{L}{2} \gamma^2 \norm{\nabla f(\xx_t) - \sum_{j = r(t)}^{t}\left( \nabla f(\xx_j) - \nabla f_{i_j}(\xx_j)\right)}^2\\
 & \stackrel{\eqref{eq:sum_of_n_vectors}}{\leq} f(\tilde \xx_t) +  \underbrace{(-\gamma \langle \nabla f(\tilde \xx_t), \nabla f(\xx_t) \rangle) }_{:=T_1} +  \underbrace{ \gamma \langle \nabla f(\tilde \xx_t),  \sum_{j = r(t)}^{t}\left(\nabla f(\xx_j) - \nabla f_{i_j}(\xx_j)\right)\rangle}_{:=T_2} \\
 & \qquad\qquad + L \gamma^2 \norm{\nabla f(\xx_t)}^2 + \underbrace{ L \gamma^2 \norm{\sum_{j = r(t)}^{t}\left( \nabla f(\xx_j) - \nabla f_{i_j}(\xx_j)\right)}^2}_{:=T_3 }.
\end{align*}
We further separately estimate terms $T_1$, $T_2$ and $T_3$. We have
$$
\ba{rcl}
\displaystyle
T_1 &=& 
\displaystyle
\frac{\gamma}{2} \norm{\nabla f(\tilde \xx_t) - \nabla f(\xx_t)}^2 - \frac{\gamma}{2} \norm{\nabla f(\tilde \xx_t)}^2 - \frac{\gamma}{2} \norm{\nabla f(\xx_t)}^2\\
\\
&\stackrel{\eqref{eq:l-smooth}}{\leq}& 
\displaystyle
\frac{L^2 \gamma}{2} \norm{\tilde \xx_t - \xx_t}^2 - \frac{\gamma}{2} \norm{\nabla f(\tilde \xx_t)}^2 - \frac{\gamma}{2} \norm{\nabla f(\xx_t)}^2 \\
\\
&\overset{\eqref{BoundForIters}}{=} &
\displaystyle
\frac{L^2 \gamma^3}{2} \norm{ \sum\limits_{j = r(t)}^{t - 1} \bigl( \nabla f(\xx_j) - \nabla f_{i_j}(\xx_j  \bigr) }^2 
- \frac{\gamma}{2} \norm{\nabla f(\tilde \xx_t)}^2 - \frac{\gamma}{2} \norm{\nabla f(\xx_t)}^2
\ea
$$
Using bound \eqref{XtXbartBound} for the first term (with a shifted index $t \mapsto t - 1$,
thus using $\bar{\phi}_t(\xx_t)$ to cover formally the trivial case $t = r(t)$), we obtain 
\begin{align*}
T_1 & \;\;\leq \;\; \frac{3L^2 \gamma^3}{2}  \bar{\phi}_{t}(\xx_{r(t)}) +  \frac{L^2\gamma^3}{8\tau}  \sum_{j = r(t)}^{t - 1}\phi_j(\xx_{r(t)}) 
+\frac{\tau L^2 \gamma^3 }{24} \sum_{j = r(t)}^{t - 1}\norm{\nabla f(\xx_j)}^2 \\
&\qquad - \frac{\gamma}{2} \norm{\nabla f(\tilde \xx_t)}^2 - \frac{\gamma}{2} \norm{\nabla f(\xx_t)}^2.
\end{align*}
The second term $T_2$ can be bounded as follows:
\begin{align*}
T_2 &=  \langle \nabla f(\tilde \xx_t),  
\gamma \sum_{j = r(t)}^{t}\left( \nabla f(\xx_j) - \nabla f_{\pi_j}(\xx_j)\right)\rangle \\
&\stackrel{\eqref{eq:scalar_product_ab} \; \text{with} \; \alpha := 32 \sqrt{3} L}{\leq} \underbrace{\frac{1}{64 \sqrt{3} L} \norm{ \nabla f(\tilde \xx_t)}^2}_{:=T_4} + \underbrace{16\sqrt{3} L\gamma^2 \norm{\sum_{j = r(t)}^{t}\left( \nabla f(\xx_j) - \nabla f_{\pi_j}(\xx_j)\right)}^2}_{:=T_5}.
\end{align*}
We estimate $T_4$ as \footnote{On the second line we could use the equation $\norm{\aa + \bb}^2 \leq (1 + \alpha) \norm{\aa}^2 + (1 + \alpha^{-1}) \norm{\bb}^2$ for some constant $\alpha < 1$ to get a better dependence on the constants.}
\begin{align*}
64 \sqrt{3} T_4 = \frac{1}{L}\norm{\nabla f(\tilde \xx_t)}^2 &= \frac{1}{L \tau} \sum_{j = 0}^{\tau - 1} \norm{\nabla f(\tilde \xx_t)}^2 \\
& \stackrel{\eqref{eq:sum_of_n_vectors}}{\leq} \frac{2}{L \tau} \sum_{j = 0}^{\tau - 1} \norm{\nabla f(\tilde \xx_t) - \nabla f(\tilde \xx_{t - j})}^2 + \frac{2}{L \tau} \sum_{j = 0}^{\tau - 1} \norm{\nabla f(\tilde \xx_{t - j})}^2\\
&\stackrel{\eqref{eq:l-smooth}}{\leq} \frac{2}{L \tau} \sum_{j = 0}^{\tau - 1} L^2 \norm{\tilde \xx_t - \tilde \xx_{t - j}}^2 + \frac{2}{L \tau} \sum_{j = 0}^{\tau - 1}\norm{\nabla f(\tilde \xx_{t - j})}^2\\
&\leq \frac{2\gamma^2 L}{\tau}  \sum_{j = 0}^{\tau - 1} \norm{\sum_{k = t - j}^{t - 1} \nabla f(\xx_k) }^2 + \frac{2}{L \tau} \sum_{j = 0}^{\tau - 1}\norm{\nabla f(\tilde \xx_{t - j})}^2\\
&\stackrel{\eqref{eq:sum_of_n_vectors}}{\leq} 2 \gamma^2 L \sum_{j = 0}^{\tau - 1} \sum_{k = t - j}^{t - 1} \norm{\nabla f(\xx_k) }^2 + \frac{2}{L \tau} \sum_{j = 0}^{\tau - 1} \norm{\nabla f(\tilde \xx_{t - j})}^2\\
&\leq 2 \gamma^2 L \tau \sum_{j = 0}^{\tau - 1} \norm{\nabla f(\xx_{t - j}) }^2 + \frac{2}{L \tau} \sum_{j = 0}^{\tau - 1} \norm{\nabla f(\tilde \xx_{t - j})}^2.
\end{align*}
Further using that $\tau = \left\lfloor\frac{1}{8\sqrt{3} L \gamma}\right\rfloor \geq 1$, which means both that $\tau \leq \frac{1}{8\sqrt{3} L \gamma}$ 
and\footnote{We note that for simplicity of presentation this lower bound is rough and could be refined if we want to improve dependence on the numerical constants.} $\tau \geq \frac{1}{16\sqrt{3} L \gamma}$ we get
\begin{align*}
\frac{1}{L}\norm{\nabla f(\tilde \xx_t)}^2 \leq \frac{\gamma}{4\sqrt{3}} \sum_{j = 0}^{\tau - 1} \norm{\nabla f(\xx_{t - j}) }^2 + 32\sqrt{3} \gamma \sum_{j = 0}^{\tau - 1} \norm{\nabla f(\tilde \xx_{t - j})}^2.
\end{align*}
Thus, 
\begin{align*}
T_4 = \frac{1}{64 \sqrt{3} L}\norm{\nabla f(\tilde \xx_t)}^2 \leq \frac{\gamma}{64 \cdot 12} \sum_{j = 0}^{\tau - 1} \norm{\nabla f(\xx_{t - j}) }^2 + \frac{\gamma}{2}\sum_{j = 0}^{\tau - 1} \norm{\nabla f(\tilde \xx_{t - j})}^2.
\end{align*}

It is left to estimate the terms $T_3 + T_5$, as follows:
$$
\ba{rcl}
T_3 + T_5 & \leq & 
\displaystyle
29 L \gamma^2\norm{\sum\limits_{j = r(t)}^{t}\left( \nabla f(\xx_j) - \nabla f_{\pi_j}(\xx_j)\right)}^2 \\
\\
& \overset{\eqref{XtXbartBound}}{\leq} &
\displaystyle
87 L \gamma^2 \phi_t(\xx_{r(t)})
+ \frac{29 L \gamma^2}{4 \tau} \sum\limits_{j = r(t)}^t \phi_j(\xx_{r(t)})
+ \frac{29\tau L \gamma^2}{12} \sum\limits_{j = r(t)}^t \norm{\nabla f(\xx_j)}^2 \\
\\
& \overset{\tau \leq \frac{1}{8\sqrt{3}L \gamma}}{\leq} &
\displaystyle
87 L \gamma^2 \phi_t(\xx_{r(t)})
+ \frac{29 L \gamma^2}{4 \tau} \sum\limits_{j = r(t)}^t \phi_j(\xx_{r(t)})
+ \frac{\gamma}{6} \sum\limits_{j = r(t)}^t \norm{\nabla f(\xx_j)}^2.
\ea
$$

\paragraph{Summing up estimations for { $T_1$}, $T_2$, $T_3$, $T_4$ and $T_5$.}
After summing up, the descent equation for the restart iterations is

\begin{align}
 & f(\tilde \xx_{t + 1})\quad  \leq 
 \quad f(\tilde \xx_t)
 \quad   {  \underbrace{ 
 \ba{l} \displaystyle
 \quad  - \; {\frac{\gamma}{2}} \norm{\nabla f(\tilde \xx_t)}^2 - {\frac{\gamma}{2}} \norm{\nabla f(\xx_t)}^2 
    + { \frac{3L^2 \gamma^3}{2}}  \bar{\phi}_{t}(\xx_{r(t)})\qquad 
  \\[10pt]
  \displaystyle
  \quad  + \; {\frac{L^2\gamma^3}{8 \tau}  } \sum\limits_{j = r(t)}^{t - 1}\phi_j(\xx_{r(t)})
 +  {\frac{\tau L^2 \gamma^3 }{24}}  \sum\limits_{j = r(t)}^{t -1}\norm{\nabla f(\xx_j)}^2\qquad 
 \ea
  }_{\text{bound for } T_1} } \nonumber\\[5pt]
 & \qquad  + \;\; L \gamma^2 \norm{\nabla f(\xx_t)}^2 
 + \underbrace{  87 L \gamma^2 \phi_t(\xx_{r(t)}) +  \frac{29L \gamma^2 }{4 \tau}  \sum\limits_{j = r(t)}^{t}\phi_j(\xx_{r(t)})  
 + \frac{\gamma}{6} \sum\limits_{i = r(t)}^{t}\norm{\nabla f(\xx_i)}^2}_{\text{bound for } T_3 + T_5}\nonumber\\[5pt]
 & \qquad  + \;\; \underbrace{ \frac{\gamma}{64 \cdot 12} \sum\limits_{i = 0}^{\tau - 1} \norm{\nabla f(\xx_{t - i}) }^2 + \frac{\gamma}{2}\sum_{i = 0}^{\tau - 1} \norm{\nabla f(\tilde \xx_{t - i})}^2.}_{\text{bound for } T_4} \label{eq:descent_restart}
\end{align}

\paragraph{Summing up the descent equations \eqref{eq:descent_no_restart} and \eqref{eq:descent_restart} from all the iterations $t$. }
We will use that
\begin{align*}
    \sum_{t = 0}^T \frac{\tau L^2 \gamma^3 }{24} \sum_{i = r(t)}^{t -1}\norm{\nabla f(\xx_i)}^2 
    \;\; \leq \;\; 
    \frac{\tau^2 L^2 \gamma^3 }{24} \sum_{t = 0}^T  \norm{\nabla f(\xx_t)}^2
\end{align*}
Summing up \eqref{eq:descent_no_restart} and \eqref{eq:descent_restart} for $0 \leq t \leq T$, we get
\begin{align}\label{eq:summed}
 f(\Tilde{\xx}_{T +1}) - f(\tilde \xx_0) 
 &\leq - \frac{\gamma}{2}\sum_{t = 0}^T \norm{\nabla f(\tilde \xx_t)}^2 - \left(\frac{\gamma}{2} - L \gamma^2 \right)\sum_{t = 0}^T \norm{\nabla f(\xx_t)}^2 
 + \frac{\tau^2 \gamma^3 L^2}{24}\sum_{t = 0}^{T} \norm{\nabla f(\xx_t)}^2 \nonumber \\
 &\quad + \frac{3L^2 \gamma^3 }{2}  \sum_{t = 0}^T \bar{\phi}_{t}(\xx_{r(t)}) 
 + \frac{L^2 \gamma^3}{8\tau}  \sum_{t = 0}^T  \sum_{j = r(t)}^{t - 1}\phi_j(\xx_{r(t)}) \nonumber \\
 &\quad + \left(\frac{\gamma}{6} + \frac{\gamma}{64 \cdot 24}\right) \sum_{t = 0}^T \norm{\nabla f(\xx_t)}^2 + \frac{\gamma}{2} \sum_{t = 0}^T \norm{\nabla f(\tilde \xx_{t})}^2 \nonumber \\
 &\quad + \sum_{k = 1}^{\lceil \frac{T}{\tau} \rceil} \left[  87  L \gamma^2 \phi_{k \tau - 1}(\xx_{(k - 1) \tau}) 
 +  \frac{29L\gamma^2 }{4\tau}  \sum_{j = (k - 1)\tau}^{k \tau - 1}\phi_j(\xx_{(k - 1)\tau}) \right],
\end{align}
where on the first two lines are the terms that appears in both of the cases, and on the last two lines are the terms specific to the restart iteration cases, and thus happen once every $\tau$ iterations only. 

It is left to: (i) sum up the coefficients in front of $\sum_{t = 0}^T\norm{\nabla f(\xx_t)}^2$ 
and $\sum_{t = 0}^T\norm{\nabla f(\tilde\xx_t)}^2$ 
(where the latter coeficient is equal to zero), (ii) take the expectation over the choosing orders $i_t$, and estimate terms with $\phi_t(\xx)$ through \cref{def:main_variance}.

First (i) we sup up the coefficients in front of $\sum_{t = 0}^T\norm{\nabla f(\xx_t)}^2$, that is
$$
\ba{rcl}
- \frac{\gamma}{2} + L \gamma^2 + \frac{\tau^2 \gamma^3 L^2}{24}
+ \frac{\gamma}{6} + \frac{\gamma}{64 \cdot 24}
& \overset{\tau \leq \frac{1}{8 \sqrt{3} L \gamma}}{\leq} &
- \frac{\gamma}{2} + L \gamma^2 + \frac{\gamma}{64 \cdot 24 \cdot 3}
+ \frac{\gamma}{6} + \frac{\gamma}{64 \cdot 24} \\
\\
& \overset{\gamma \leq \frac{1}{8\sqrt{3} L}}{\leq} &
- \frac{\gamma}{2} + \frac{\gamma}{8 \sqrt{3}} + \frac{\gamma}{64 \cdot 24 \cdot 3}
+ \frac{\gamma}{6} + \frac{\gamma}{64 \cdot 24} \\
\\
& \leq & -\frac{\gamma}{4}.
\ea
$$

Second (ii), we note that
for any $0 \leq t \leq T$ and $r(t) \leq j \leq  t$,
it holds that  $\E \phi_j (\xx_{r(t)}) \leq\sigma_{\tau}^2$ for $k = \lfloor \frac{t}{\tau} \rfloor$ (see \eqref{ExpPhiBound}).
Thus, we can bound the second line of \eqref{eq:summed} as%

\begin{align*}
    \frac{3L^2 \gamma^3}{2}  \sum_{t = 0}^T \E \bar{\phi}_{t}(\xx_{r(t)}) 
    +  \frac{L^2 \gamma^3}{8\tau}  \sum_{t = 0}^T  \sum_{j = r(t)}^{t - 1} \E \phi_j(\xx_{r(t)}) 
    &\leq \frac{3L^2 \gamma^3}{2}  \sum_{t = 0}^T \sigma_{\tau}^2 
        + \frac{L^2 \gamma^3}{8}  \sum_{t = 0}^T \sigma_{\tau}^2 \\
    &\leq 2 L^2 \gamma^3 \sum_{t = 0}^T \sigma_{\tau}^2\\
    & \leq 2 L^2 \gamma^3 (T + 1) \sigmat.
\end{align*}

Similarly, for the last line of \eqref{eq:summed} we can estimate
\begin{align*}
    & \sum_{k = 1}^{\lceil \frac{T}{\tau} \rceil} \left[  87 L \gamma^2  \E \phi_{k \tau - 1}(\xx_{(k - 1) \tau}) 
    +  \frac{29L\gamma^2 }{4\tau} \sum_{j = (k - 1)\tau}^{k \tau - 1} \E \phi_j(\xx_{(k - 1)\tau}) \right]  \\[10pt]
    &\quad \leq \quad 95 L \gamma^2  \sum_{k = 0}^{\lfloor\frac{T}{\tau} \rfloor} \sigmat 
     \quad \stackrel{\tau \geq \frac{1}{16 \sqrt{3} L \gamma}}{\le} \quad
     95 \cdot 16 \cdot \sqrt{3}  \cdot L^2 \gamma^3 \tau \sum_{k = 0}^{\lfloor\frac{T}{\tau} \rfloor} \sigmat \leq 95 \cdot 16 \cdot \sqrt{3}  \cdot L^2 \gamma^3 (T + 1)\sigmat .
\end{align*}

Putting back these calculations into \eqref{eq:summed}, we get
\begin{align*}
 f(\Tilde{\xx}_{T +1}) - f(\tilde \xx_0) &\leq - \frac{\gamma}{4}\sum_{t = 0}^T \norm{\nabla f(\xx_t)}^2 + A  L^2 \gamma^3 (T + 1)\sigmat,
\end{align*}
where $A = 2633$ is a numerical constant. 

Rearranging, dividing by $T + 1$, and using that $\tilde \xx_0 = \xx_0$, and that $f(\tilde \xx_{T + 1}) \geq f^\star$ we get
\begin{align*}
    \frac{1}{T + 1} \sum_{t = 0}^T \norm{\nabla f(\xx_t)}^2 \leq \cO\biggl( \frac{f(\xx_0) - f^\star}{\gamma T} + L^2 \gamma^2 \frac{\tau}{T} \sigmat \biggr).
\end{align*}

\section{Proofs of the bounds in Table~\ref{tab:sigmas}}
In this section we give upper bounds on $\sigmat$ for the special cases given in \cref{tab:sigmas}. 

\subsection{SGD, \cref{ex:sgd}} 
Recall that according to Def.~\ref{def:main_variance},
\begin{align*}
    \sigmat = \sup_{\xx\in\R^d}\max_{\substack{k = 0, \dots, \lfloor \frac{T}{\tau}\rfloor \\ j=0, \dots, \tau-1}} \E~ \left[ \Big\|\textstyle\sum_{t = k \tau}^{\min\{k \tau + j, T\}} \left(\nabla f_{i_t}(\xx) - \nabla f(\xx)\right) \Big\|^2 \Big|~i_0, \dots, i_{k \tau - 1} \right]. 
\end{align*}
Since in SGD each of $i_t$ are sampled independently uniformly at random from $[n]$, the conditional expectation is equal to unconditional, and we also know that $\E \nabla f_{i_t}(\xx) = \nabla f(\xx)$, and 
\begin{align*}
    &\sup_{\xx \in \R^d} \ \max_{\substack{k = 0, \dots, \lfloor \frac{T}{\tau}\rfloor \\ j=0, \dots, \tau-1}} \E~ \left\|\sum_{t = k \tau}^{\min\{k \tau + j, T\}} \left(\nabla f_{i_t}(\xx) - \nabla f(\xx)\right) \right\|^2 \\
    &\quad \;=\; \quad \sup_{\xx_0 \in \R^d} \ \max_{\substack{k = 0, \dots, \lfloor \frac{T}{\tau}\rfloor \\ j=0, \dots, \tau-1}} \E~ \sum_{t = k \tau}^{\min\{k \tau + j, T\}}  \left\|\nabla f_{i_t}(\xx) - \nabla f(\xx) \right\|^2 \\
    &\quad \stackrel{j = \tau}{\leq} \quad \sup_{\xx_0 \in \R^d}~\max_{k = 0, \dots, \lfloor \frac{T}{\tau}\rfloor} \E \sum_{t = k \tau}^{\min\{k \tau + \tau, T\}} \left\|\nabla f_{i_t}(\xx) - \nabla f(\xx) \right\|^2 \\
    &\quad \;\stackrel{\eqref{eq:sigmasgd}}{\leq}\;\; \max_{k = 0, \dots, \lfloor \frac{T}{\tau}\rfloor} \sum_{t = k \tau}^{\min\{k \tau + \tau, T\}} \sigmasgd \\
    &\quad \;\leq\; \quad \tau \sigmasgd.
\end{align*}
This proves the first bound in Table~\ref{tab:sigmas} that $\sigmat \leq \tau \sigmasgd$ for the SGD algorithm. 

\subsection{Incremental Gradient and Single Shuffle, \cref{ex:ig}, \cref{ex:SS}} 

First, we note that if the interval between $k \tau$ and $k \tau + j$ contains the full epoch inside it, i.e. $\{k' n, \dots, (k' + 1) n\} \subseteq \{k \tau, \dots, k \tau + j\}$ for an integer $k'$, the gradients from this epoch cancel out with the full gradient. This is because $\sum_{i = 1}^n \nabla f_{\pi_i}(\xx) = \sum_{i = 1}^n \nabla f_{i}(\xx) = n \nabla f(\xx)$ and 
thus $\sum_{i = 1}^n (\nabla f_{\pi_i}(\xx)  - \nabla f(\xx))= 0$.

Thus, w.l.o.g. we can assume that

\begin{align*}
    \sum_{t = k \tau}^{k \tau + j} \left(\nabla f_{i_t}(\xx) - \nabla f(\xx)\right) = \sum_{i = j_1}^{n} \left(\nabla f_{i}(\xx) - \nabla f(\xx)\right) + \sum_{i = 1}^{j_2} \left(\nabla f_{i}(\xx) - \nabla f(\xx)\right).
\end{align*}
Moreover, if $j_1 < j_2$ then the full epoch will cancel with the full gradient $\nabla f(\xx)$ and this sum will be equal only to the intersecting part $\sum_{i = j_1}^{j_2} \left(\nabla f_{i}(\xx) - \nabla f(\xx)\right)$. Thus, we will just assume that 
\begin{align*}
    \sum_{t = k \tau}^{k \tau + j} \left(\nabla f_{i_t}(\xx) - \nabla f(\xx)\right) = \sum_{i \in \cS} \left(\nabla f_{i}(\xx) - \nabla f(\xx)\right),
\end{align*}
where $\cS \subset [n]$, and thus $|\cS| \leq \min\{j, n\} \leq \min\{\tau, n\}$. Therefore,
\begin{align*}
    \norm{\sum_{t = k \tau}^{k \tau + j} \left(\nabla f_{i_t}(\xx) - \nabla f(\xx)\right)}^2 &= \norm{\sum_{i \in \cS} \left(\nabla f_{i}(\xx) - \nabla f(\xx)\right)}^2\\[5pt]
    &\stackrel{\eqref{eq:sum_of_n_vectors}}{\leq} |\cS| \sum_{i \in \cS} \norm{\nabla f_{i}(\xx) - \nabla f(\xx)}^2 \\[5pt]
    &\leq |\cS| n \cdot \frac{1}{n}\sum_{i = 1}^n \norm{\nabla f_{i}(\xx_{r(t)}) - \nabla f(\xx_{r(t)})}^2 \\[5pt]
    &\leq |\cS| n \cdot \sigmasgd\\[5pt]
    &\leq  \min\{\tau, n\} n \cdot \sigmasgd.
\end{align*}
This proves the bound, $\sigmakt \leq \min\{\tau, n\} n \sigmasgd$.

\subsection{Random Shuffle, \cref{ex:RR}}\label{sec:RR_proof}

In order to estimate $\sigmat$, we will first estimate $\E~ \left\|\sum_{t = k \tau}^{k \tau + j} \left(\nabla f_{i_t}(\xx) - \nabla f(\xx)\right) \right\|^2 $, 
for a fixed $0 \leq j \leq \tau - 1$, and then we will take the maximum over it. 
We need to consider only $j$ such that $k \tau + j \leq T$. 

First, we note that if the interval between $k \tau$ and $k \tau + j$ contains the full epoch inside it, i.e. $\{k' n, \dots, (k' + 1) n\} \subseteq \{k \tau, \dots, k \tau + j\}$ for an integer $k'$, the gradients from this epoch cancel out with the full gradient. This is because $\sum_{i = 1}^n \nabla f_{\pi_i}(\xx) = \sum_{i = 1}^n \nabla f_{i}(\xx) = n \nabla f(\xx)$ and 
thus $\sum_{i = 1}^n (\nabla f_{\pi_i}(\xx)  - \nabla f(\xx))= 0$.

Next, we note that the interval between $k \tau$ and $k \tau + j$ might overlap with more than one epochs (one epochs is equal to $n$ iterations), but it can contain only at most two incomplete epochs.
Thus, w.l.o.g., we consider that the interval $k \tau$ and $k \tau + j$ intersects with the two (not full) epochs, and the lengths of overlaps are equal to $j_1$ and $j_2$ correspondingly. And thus,
\begin{align*}
    \sum_{t = k \tau}^{k \tau + j} \left(\nabla f_{i_t}(\xx) - \nabla f(\xx)\right) = \sum_{t = 1}^{j_1} \left(\nabla f_{\pi^1_t}(\xx) - \nabla f(\xx)\right) + \sum_{t = 1}^{j_2} \left(\nabla f_{\pi^2_t}(\xx) - \nabla f(\xx)\right)
\end{align*}
where $\pi^1$ and $\pi^2$ correspond to the two permutations, and $j_1 + j_2 \leq j + 1 \leq \tau$. 
Taking the norm and the expectation over permutations $\pi^1$ and $\pi^2$ (note that indices $j_1$ and $j_2$ are non-randomized fixed parameters that depend only on $j, \tau$ and $n$),

We further use that 
\begin{align*}
    &\norm{\sum_{t = 1}^{j_1} \left(\nabla f_{\pi^1_t}(\xx) - \nabla f(\xx)\right) + \sum_{t = 1}^{j_2} \left(\nabla f_{\pi^2_t}(\xx) - \nabla f(\xx)\right)}^2 \\
    &\leq 2 \norm{\sum_{t = 1}^{j_1} \left(\nabla f_{\pi^1_t}(\xx) - \nabla f(\xx)\right)}^2 + 2 \norm{\sum_{t = 1}^{j_2} \left(\nabla f_{\pi^2_t}(\xx) - \nabla f(\xx)\right)}^2
\end{align*}
Taking the conditional expectation, and estimating the two terms using similar calculations to the previous case of Incremental Gradient and Single Shuffle, we arrive that $\sigmat \leq 4 \min\{\tau, n\} n \cdot \sigmasgd$.

\subsection{Single function, \cref{ex:singlefunc}}
Since $i_t \equiv 1~\forall t$, we have, using that $j \leq \tau$:
\begin{align*}
    \norm{\sum_{t = k \tau}^{k \tau + j} \left(\nabla f_{1}(\xx) - \nabla f(\xx)\right)}^2 = \norm{ j \left(\nabla f_{1}(\xx) - \nabla f(\xx)\right)}^2 = j^2 \norm{\nabla f_{1}(\xx) - \nabla f(\xx)}^2 \leq \tau^2 \sigmaone.
\end{align*}

\section{Unavoidable bias in \eqref{eq:single_func_rate} }
We now show that $\sigmaone$ in \eqref{eq:single_func_rate} is an unavoidable bias. Assume that we have two functions $f_1(\xx) = \frac{1}{2}\norm{\xx - \aa}^2$ and $f_2(\xx) = \frac{1}{2}\norm{\xx + \aa}^2$ for some constant vector $\aa$. Then,
\begin{align*}
    \nabla f_1(\xx) = \xx - \aa && \nabla f(\xx) = \xx
\end{align*}
and thus, $\sigmaone = \norm{\aa}^2$. If we only optimize over the function $f_1$, then Algorithm~\eqref{eq:algo} will converge to the optimum of $f_1$ which is $\xx_1^\star = \aa$. Then the norm of the full gradient $\norm{\nabla f(\xx_1^\star)}^2 = \norm{\aa}^2 = \sigmaone$. Thus, the algorithm can converge only to the neighbourhood of the size $\sigmaone$.

\end{document}